\documentclass[twoside]{article}
\usepackage[accepted]{aistats2023}

\usepackage{aistats2023}
%%%%% NEW MATH DEFINITIONS %%%%%

\usepackage{amsmath,amsfonts,bm}

% Mark sections of captions for referring to divisions of figures

% Highlight a newly defined term

% Figure reference, lower-case.

% Figure reference, capital. For start of sentence

% Section reference, lower-case.

% Section reference, capital.

% Reference to two sections.

% Reference to three sections.

% Reference to an equation, lower-case.
\def\eqref#1{equation~\ref{#1}}
% Reference to an equation, upper case

% A raw reference to an equation---avoid using if possible

% Reference to a chapter, lower-case.

% Reference to an equation, upper case.

% Reference to a range of chapters

% Reference to an algorithm, lower-case.

% Reference to an algorithm, upper case.

% Reference to a part, lower case

% Reference to a part, upper case

\def\1{\bm{1}}

% Random variables

% rm is already a command, just don't name any random variables m

% Random vectors

% Elements of random vectors

% Random matrices

% Elements of random matrices

% Vectors

\def\vmu{{\bm{\mu}}}

\def\vb{{\bm{b}}}

\def\vh{{\bm{h}}}

\def\vw{{\bm{w}}}

\def\vy{{\bm{y}}}

% Elements of vectors

% Matrix

\def\mH{{\bm{H}}}
\def\mI{{\bm{I}}}

\def\mM{{\bm{M}}}

\def\mW{{\bm{W}}}

\def\mY{{\bm{Y}}}

\def\mLambda{{\bm{\Lambda}}}
\def\mSigma{{\bm{\Sigma}}}

% Tensor
\DeclareMathAlphabet{\mathsfit}{\encodingdefault}{\sfdefault}{m}{sl}
\SetMathAlphabet{\mathsfit}{bold}{\encodingdefault}{\sfdefault}{bx}{n}

% Graph

% Sets

% Don't use a set called E, because this would be the same as our symbol
% for expectation.

% Entries of a matrix

% entries of a tensor
% Same font as tensor, without \bm wrapper

% with widetilde ~

% The true underlying data generating distribution

% The empirical distribution defined by the training set

% The model distribution

% Stochastic autoencoder distributions

 % Laplace distribution

\newcommand{\R}{\mathbb{R}}

% Wolfram Mathworld says $L^2$ is for function spaces and $\ell^2$ is for vectors
% But then they seem to use $L^2$ for vectors throughout the site, and so does
% wikipedia.

 % See usage in notation.tex. Chosen to match Daphne's book.

\DeclareMathOperator*{\argmax}{arg\,max}
\DeclareMathOperator*{\argmin}{arg\,min}

\usepackage{microtype}
\usepackage{graphicx}
\usepackage{subfigure}
\usepackage{booktabs}
\usepackage{setspace}
\usepackage{multirow}
\usepackage{color}

% For theorems and such
\usepackage{amsmath}
\usepackage{amssymb}
\usepackage{mathtools}
\usepackage{amsthm}

\theoremstyle{definition}
\newtheorem{definition}{Definition}
\newtheorem{props}{Proposition}

\usepackage{natbib}
\usepackage[colorlinks,citecolor=blue,urlcolor=blue]{hyperref}
\usepackage{url}
\usepackage{dsfont}
\usepackage{float}
\usepackage{bbding}

\newcommand\blfootnote[1]{%
  \begingroup
  \renewcommand\thefootnote{}\footnote{#1}%
  \addtocounter{footnote}{-1}%
  \endgroup
}

% If your paper is accepted, change the options for the package
% aistats2023 as follows:
%
%\usepackage[accepted]{aistats2023}
%
% This option will print headings for the title of your paper and
% headings for the authors names, plus a copyright note at the end of
% the first column of the first page.

% If you set papersize explicitly, activate the following three lines:
%\special{papersize = 8.5in, 11in}
%\setlength{\pdfpageheight}{11in}
%\setlength{\pdfpagewidth}{8.5in}

% If you use natbib package, activate the following three lines:
%\usepackage[round]{natbib}
%\renewcommand{\bibname}{References}
%\renewcommand{\bibsection}{\subsubsection*{\bibname}}

% If you use BibTeX in apalike style, activate the following line:
%\bibliographystyle{apalike}
\usepackage[english]{babel}

\begin{document}

% If your paper is accepted and the title of your paper is very long,
% the style will print as headings an error message. Use the following
% command to supply a shorter title of your paper so that it can be
% used as headings.
%
%\runningtitle{I use this title instead because the last one was very long}

% If your paper is accepted and the number of authors is large, the
% style will print as headings an error message. Use the following
% command to supply a shorter version of the authors names so that
% they can be used as headings (for example, use only the surnames)
%
%\runningauthor{Surname 1, Surname 2, Surname 3, ...., Surname n}

\twocolumn[

\aistatstitle{Inducing Neural Collapse in Deep Long-tailed Learning}

\aistatsauthor{Xuantong Liu$^{1,*}$ \And Jianfeng Zhang$^2$ \And Tianyang Hu$^2$ \And
He Cao$^1$ \And Lujia Pan$^2$ \And Yuan Yao$^{1,}$\Envelope }

%\aistatsauthor{He Cao$^1$ \And Yuan Yao$^{1, \dagger}$ \And Lujia Pan$^2$}

\aistatsaddress{$^1$ The Hong Kong University of Science and Technology, $^2$Huawei Noah’s Ark Lab} 

% \aistatsauthor{Xuantong Liu* \hspace{-2mm} \And Jianfeng Zhang \hspace{-2mm}  \And \hspace{-2mm}  Tianyang Hu \hspace{-2mm}  \And He Cao \hspace{-2mm} \And Yuan Yao \hspace{-2mm} \And Lujia Pan}

% \aistatsaddress{ Hong Kong University of\\
% Science and Technology  \And Huawei\\ Noah’s Ark Lab
% \And Huawei\\ Noah’s Ark Lab\And Hong Kong University of\\
% Science and Technology\And  Hong Kong University of\\
% Science and Technology\And Huawei \\ Noah’s Ark Lab} 
]

\begin{abstract}
Although deep neural networks achieve tremendous success on various classification tasks, the generalization ability drops sheer when training datasets exhibit long-tailed distributions. 
One of the reasons is that the learned representations (i.e. features) from the imbalanced datasets are less effective than those from balanced datasets. 
Specifically, the learned representation under class-balanced distribution will present the \textit{Neural Collapse} ($\mathcal{NC}$) phenomena. $\mathcal{NC}$ indicates the features from the same category are close to each other and from different categories are maximally distant, showing an optimal linear separable state of classification. However, the pattern differs on imbalanced datasets and is partially responsible for the reduced performance of the model.
In this work, we propose two explicit feature regularization terms to learn high-quality representation for class-imbalanced data.
With the proposed regularization, $\mathcal{NC}$ phenomena will appear under the class-imbalanced distribution, and the generalization ability can be significantly improved.
Our method is easily implemented, highly effective, and can be plugged into most existing methods. The extensive experimental results on widely-used benchmarks show the effectiveness of our method. 
\end{abstract}

\section{INTRODUCTION}

\begin{figure}[htbp]
\centering
\subfigure[]{
\begin{minipage}[t]{0.32\linewidth}
\centering
\includegraphics[width=1in]{./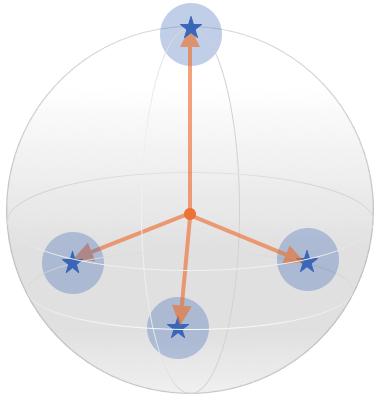}
%\caption{fig1}
\end{minipage}%
}%
\subfigure[]{
\begin{minipage}[t]{0.32\linewidth}
\centering
\includegraphics[width=1in]{./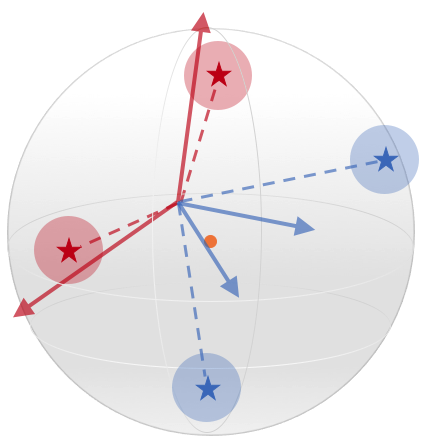}
%\caption{fig2}
\end{minipage}%
}%
\subfigure[]{
\begin{minipage}[t]{0.32\linewidth}
\centering
\includegraphics[width=1in]{./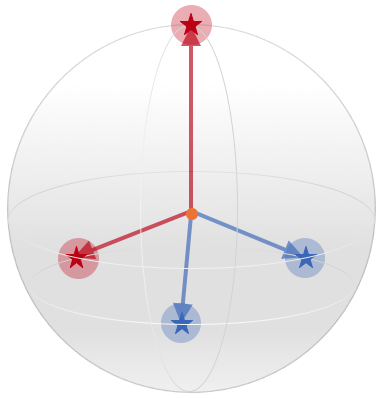}
%\caption{fig2}
\end{minipage}
}%
% \centering
\caption{Illustration of geometry configuration of the zero-centered class means and classifier weights under (a) balanced dataset, (b) imbalanced dataset, and (c) imbalanced dataset with our method.  The arrows represent classifier weights and the stars are the class centers. The size of the circle around the star reflects the variance of the feature from the same class. In (b) and (c), red and blue represent the majority and minority classes, respectively. Note that under imbalanced label distribution, both the centered class means and classifier weights form an asymmetric structure and are no longer parallel.}
 \vspace*{-5mm}
\end{figure}

Modern deep neural networks have shown the ability to outperform humans on many tasks, such as computer vision, natural language processing, playing games, etc., and keep refreshing state-of-the-art performance for complex classification tasks. However, when the training dataset is class-imbalanced, such as a long-tailed distribution, where a few majority classes occupy most of the training samples while a large number of minority classes own very limited samples, the performance of the model drops off a cliff \citep{van2017devil,buda2018systematic}. These imbalanced distributions are ubiquitous in real-world applications, e.g., fault diagnosis, face recognition, autonomous driving, etc. Therefore, how to improve the discriminative ability of the model trained on the imbalanced dataset has always been a topic of considerable concern.%Therefore, it has always been a topic of considerable interest to improve the discriminative ability of models on training sets with long-tailed distributions.
\blfootnote{* This work is done when Xuantong Liu was doing her internship at Huawei Noah’s Ark. \Envelope\ Corresponding author email:  \url{yuany@ust.hk}.
}

Some recent studies focus on learning more effective representation to improve the long-tailed recognition ability. Supervised contrastive loss \citep{khosla2020supervised} is utilized to learn compact within-class and maximally distant between-class representation by introducing uniformly distributed class centers, which leads to improvement in long-tailed performance \citep{li2022targeted, cui2021parametric, zhu2022balancedCL}. 
These characteristics of the feature representation are consistent with those learned from balanced datasets \citep{graf2021dissecting}, where the classification models can spontaneously learn tight and discriminative features. However, contrastive learning is more computationally expensive and requires more iterations to converge than the standard Cross-Entropy (CE) loss.

Meanwhile, the learning behavior of deep classification models in a balanced setting has been investigated both empirically and theoretically \citep{papyan2020prevalence,galanti2021role, han2022neural}. 
The \textit{Neural Collapse} ($\mathcal{NC}$) phenomenon was uncovered by \cite{papyan2020prevalence} when investigating the last-layer embedding, i.e. the feature representation, and the corresponding classifier weights in deep classification models during training.
$\mathcal{NC}$ shows that the learned features (or embedded vectors) of the same class will collapse to their class centers. Meanwhile, these class centers, after globally centered, as well as the classifier weights, will form a simplex equiangular tight frame (ETF) during the terminal phase of training (TPT), i.e. when the model achieves zero training error. The ETF structure maximizes the between-class variability so as the \textit{Fisher discriminant ratio} \citep{fisher1936use}, resulting in an optimal linear separable state for classification. Subsequent studies have found more characteristics of this phenomenon, including the global optimal property \citep{zhu2021geometric} and generalization ability \citep{galanti2021role}. % A number of follow-up studies have further demonstrated the benefits of the emergence of the $\mathcal{NC}$ phenomenon, including the global optimal property \citep{zhu2021geometric} and generalization ability \citep{galanti2021role}.

% It has been demonstrated theoretically and experimentally that the generalization ability and robustness of the over-parameterized model are improved when the $\mathcal{NC}$ phenomenon starts to appear \citep{papyan2020prevalence} \citep{galanti2021role}. Meanwhile, $\mathcal{NC}$ also indicates that deep neural networks using either cross-entropy (CE) or mean squared error (MSE) loss will naturally achieve intra-class feature clustering and inter-class margin maximization in classification tasks even when trained with no other inductive bias attached by default. 

However, on imbalanced datasets, the deep neural networks will exhibit different geometric structures, and some $\mathcal{NC}$ phenomena will no longer occur \citep{fang2021exploring,thrampoulidis2022imbalance}. The
last-layer features of the same class still converge to their class means, but the class means, as well as the classifier weights, are not in the form of ETFs any more. 
% The norm and the pairwise angles of the centered class means, as well as classifier weights, will not be constant. 
Specifically, compared to majority classes, the learned features of minority classes will have a larger norm, and correspondingly the norm of classifier weights will be smaller \citep{kang2019decoupling, fang2021exploring}. Furthermore, as the imbalance level increases, the phenomenon of \textit{Minority Collapse} may arise, in which both the learned representations and the classifier weights on minority classes will become indistinguishable \citep{fang2021exploring}. The absence of some $\mathcal{NC}$ property partially explains the performance gap between the balanced and imbalanced datasets. 
 
In this paper, we first elaborate that the appearance of $\mathcal{NC}$ can help to minimize the generalization error in the imbalanced problem. According to this property, we propose two simple yet effective regularization terms to explicitly induce all the $\mathcal{NC}$ phenomena in neural networks trained on imbalanced datasets. The regularization terms can be added to CE loss directly. Compared with supervised contrastive learning, these terms have lower computational cost. Our proposed method not only helps the $\mathcal{NC}$ to occur faster for models trained on the balanced datasets, but also drives the $\mathcal{NC}$ phenomenon to occur on datasets with imbalanced categories. The resulting model can also obtain better generalization ability and robustness without over-training as in \cite{papyan2020prevalence}. Furthermore, our proposed method is orthogonal to most existing methods dealing with long-tailed problems. It thus can be easily plugged into the objective function to obtain further improvements.

In summary, our contributions can be listed below:
\begin{itemize}
    %\item We discover that under the class-imbalanced distribution, the variance of the learned features is approximately the same across all categories, but the centers of minority classes will be close to that of the majority ones, which can cause the intersection of features from different categories.
    \item
    % the within-class variances of the last-layer features are approximately equal for all classes, but 
    We observe that when training data is imbalanced, the class centers of minority classes move closer to those of the majority classes, making their instances difficult to distinguish.
    \item We demonstrate that, although some $\mathcal{NC}$ phenomena do not naturally exist in an imbalanced case, we can achieve lower generalization error when all $\mathcal{NC}$ proprieties hold. Thus we propose two simple yet effective regularization terms to manually induce the $\mathcal{NC}$ during imbalanced training.
    \item We experimentally show that our method can significantly improve the performance in various long-tailed tasks and boost most existing methods.
\end{itemize}

% also, \cite{zhu2021geometric} demonstrates for the \textit{unconstrained feature model} that the occurrence of the $\mathcal{NC}$ phenomenon indicates that the model achieves global minimizers. 

% Specifically, $\mathcal{NC}_1$ that reflects the variability collapse will always tend to occur no matter the training dataset is balanced or not; $\mathcal{NC}_2$ and \textbf{NC3} that suggest the classifier and feature class means will be aligned to each other and form one elegant geometric structure will inevitably fail when the training label distribution is uneven.

\section{PROBLEM SETUP}
\subsection{Preliminaries}
\label{sec:pre}
% Let $\vh_{i} \in \R^P$ and $\vy_{i}\in \R^K$ be the last last-layer feature and the one-hot label vector of the $i$-th sample, respectively.
%In training set, we have in total $n=\sum_{k=1}^K n_k$ samples from $K$ classes, where $n_k$ is the sample size of class $k$. 
%We denote $\mH=[\vh_{1}, \cdots, \vh_{n}]^T\in \R^{n\times P}$ and $\mY=[\vy_{1}, \cdots, \vy_{n}]^T \in \R^{n\times K}$ be the matrices obtained by stacking the all the feature vectors $\vh_{i}^\mathrm{T}$ and one-hot label vectors $\vy_{i}^\mathrm{T}$ as their row vectors, respectively. Denote the weight matrix of the last linear layer as $\mW=[\vw_1, \cdots,\vw_K] \in \R^{P \times K}$, obtained by stacking the classifiers $\vw_k \in \R^P$ as columns; and the corresponding bias vector is $\vb = [b_1, ..., b_K]^\mathrm{T} \in \R^K$. We use $||\cdot||_F$ and $||\cdot||$ are used to denote the Frobenius norm of a matrix and the $l_2$-norm of a vector, respectively. 
Let $f_{\phi} \circ g_{\theta}(\cdot)$ denote a neural network classifier, where $g_{\theta}(\cdot)$ is a feature extractor and $f_{\phi}(\cdot)$ is a linear classifier. We define $\mH=[\vh_{1}, \cdots, \vh_{n}]^T\in \R^{n\times P}$ to be the output of $g_{\theta}(\cdot)$. Here $P$ is the dimension of the latent feature, and $n$ is the training sample size. The weights of $f_{\phi}$ are denoted by $\mW=[\vw_1, \cdots,\vw_K] \in \R^{P \times K}$, and the corresponding bias vector is $\vb = [b_1, ..., b_K]$, where $K$ is the number of classes. $\vh_i \in \R^P$ and $y_i \in \{k\}_{k=1}^K$ denote the feature and label of the $i$-th sample.
The label matrix is denoted by $\mY \in \R^{n\times K}$. In the training data, we have $n=\sum_{k=1}^K n_k$, where $n_k$ is the sample size of class $k$. We use $||\cdot||_F$ and $||\cdot||$ to denote the Frobenius norm of a matrix and the $l_2$-norm of a vector.

% We use $[K]:=\{1, 2, ..., K\}$ to denote the set of integers from 1 to $K$.
% Let $\mH = [\vh_{1,1}, \cdots, \vh_{n,K}]^T \in \R^{n\times P}$ be the output of last layer of neural network and $\mY = [\vy_1, \cdots, \vy_n]^T \in \R^{n\times K}$ be the one-hot encoder matrix. Here, $\vh_i \in \R^P$ is the latent vector of $i$-th sample and $\vy_{i} \in \R^K$ is the corresponded one-hot encoder label vector. We denote the weight matrix of the last linear layer by $\mW \in$ $\displaystyle \R^{P \times K}$, obtained by stacking the classifiers $\vw_k \in \R^P$ as columns; $\vb = [b_1, ..., b_K]^\mathrm{T} \in \R^K$ is the bias vector. In this paper, $||\cdot||_F$ and $||\cdot||$ are used to denote the Frobenius norm of a matrix and the $l_2$-norm of a vector, respectively. We use $[K]:=\{1, 2, ..., K\}$ to denote the set of integers from 1 to $K$. In training set, we have $n=\sum_{k=1}^K n_k$, where $n_k$ is the sample size of class $k$. 

\begin{definition}[Simplex ETF]
\label{def:ETF}
A simplex ETF is a collection of equal-length and maximally-equiangular vectors. We call a $P\times K$ matrix $\mM$ an ETF if it satisfies
\begin{equation}
    \mM^{\mathrm{T}} \mM=\alpha\left(\frac{K}{K-1} \mI - \frac{1}{K-1}\mathds{1}_K\mathds{1}_K^\mathrm{T}\right)
\end{equation}
for some non-zero scalar $\alpha$. Where $\mI$ is the  identity matrix, and $\mathds{1}_K$ is an all-ones vector.
\end{definition}

Let $\vmu_k=\frac{1}{n_k}\sum_{y_i=k} \vh_{i}$ be the center of class $k$ and $\vmu_C=\frac{1}{K}\sum_{k=1}^{K} \vmu_k$ be the arithmetic mean of the class centers. In the balanced case, where we have $n_k=\frac{n}{K}$ for each class, $\mathcal{NC}$ will appear during TPT. The phenomena can be formally described by four properties: %The interconnected phenomena of $\mathcal{NC}$ involves can be precisely formulated as:
\begin{itemize}
    \item $(\mathcal{NC}_1)$ \textbf{Variability collapse.} Intra-class variances collapse to zero during the terminal phase of training, i.e., for any sample $i$ from class $k$, we have
    \begin{equation}
    ||\vh_{i}-\vmu_k||=0
    \end{equation}
\end{itemize}
\begin{itemize}
    \item $(\mathcal{NC}_2)$ \textbf{Convergence to simplex ETF.} The class centers (after zero-center normalization) converge to the vertices of an ETF, i.e.
    % , indicating the centered class means are maximally distant, aligns with the classic notion of linear discriminant analysis \citep{fisher1936use}. \jf{All class centers, after removing the global center, form an simplex ETF.}
% \begin{equation}
% \begin{aligned}
\begin{gather}
    \cos (\vmu_k-\vmu_C, \vmu_{k'}-\vmu_C) = -\frac{1}{K-1},\\
    ||\vmu_k-\vmu_C||=||\vmu_{k'}-\vmu_C||.
% \end{aligned}
\end{gather}
% \end{equation}
\end{itemize}
\begin{itemize}
    \item $(\mathcal{NC}_3)$ \textbf{Convergence to self-duality.} The weights of linear classifiers are parallel to the corresponded zero-centered class centers, i.e. %The weights of the last layer linear classifiers are perfectly aligned with the corresponding class-means and just same with the centered class-means after scaling.
\begin{equation}
    % \frac{\vw_k}{||\vw_k||} = \frac{\vmu_k-\vmu_C}{||\vmu_k-\vmu_C||}.
    \vw_k= \alpha (\vmu_k-\vmu_C).
\end{equation}
\end{itemize}
\begin{itemize}
    \item $(\mathcal{NC}_4)$ \textbf{Simple decision rule.} Given a feature, the last-layer classifier's behavior is equivalent to the nearest class center (NCC) decision rule, i.e.
\end{itemize}
\begin{equation}
    \argmax\limits_{k}\langle \vw_k, \vh \rangle = \argmin \limits_{k} ||\vh-\vmu_k||
\end{equation}

\subsection{Neural Collapse and Imbalanced Data}
In this section, we first illustrate why $\mathcal{NC}$ disappears on imbalanced datasets using mean squared error (MSE) loss. 
% Unlike \cite{thrampoulidis2022imbalance}, we do not require simplex label encoding. 
Then we demonstrate that the $\mathcal{NC}$ properties will lead to a lower generalization error bound thus we can benefit from it under imbalanced distribution. %Secondly, we will declare the presence of $\mathcal{NC}$ on imbalanced datasets is also important through providing a lower generalization error from a \textit{domain adaptation} perspective.

\subsubsection{The Optimal Classifier Under Imbalanced Distribution}
Some recent studies have shown that the test performance of neural networks trained with MSE loss is comparable to those trained with CE loss in classification tasks \citep{demirkaya2020exploring, fang2021exploring, hu2021understanding, han2022neural}. Thanks to its tractability, we can use MSE loss to illustrate the absence of the $\mathcal{NC}$ phenomenon on imbalanced datasets. For linear classifiers, the MSE loss is
\begin{equation}
    \mathcal{L} (\mH, \mW)=\frac{1}{2n}||\mY - (\mH \mW + \mathds{1}_n\vb^\mathrm{T})||_F^2.
    \label{LS1}
\end{equation}
Let $\bar{\vh}=\frac{1}{n}\sum_{i=1}^n \vh_i$ be the global feature mean,
$\mSigma_T = (\mH-\mathds{1}_n\bar{\vh})^\mathrm{T}(\mH-\mathds{1}_n\bar{\vh})$ be the total covariance matrix of $\mH$ and $\dot{\mM}=[\vmu_1-\bar{\vh},...,\vmu_K-\bar{\vh}] \in \R^{P\times K}$. We can have the closed form of the optimal $\mW$ and $\vb$ under the MSE loss as follows.
\begin{props}
\label{prop:W_LS}
\citep{webb1990optimised}. In general, for fixed features $\mH$, the optimal weight matrix and the bias vector that minimize $\mathcal{L}$($\mH,\mW$) are
% \begin{equation}
%     % \widetilde{\mW}_{LS}=(\widetilde{\mH}^\mathrm{T}\widetilde{\mH})^\mathrm{-1}[n_1\widetilde{\vmu}_1,...,n_K\widetilde{\vmu}_K]^,
%     \widetilde{\mW}_{LS}=(\widetilde{\mH}^\mathrm{T}\widetilde{\mH})^\mathrm{-1}\widetilde{\mM} \mLambda
%     \widetilde{\mM}=[\widetilde{\vmu}_1,...,\widetilde{\vmu}_K]
% \end{equation}
% \begin{equation}
% \begin{aligned}
% % \begin{gather}
%     &\mW_{LS} =(\mSigma_T^\dagger \dot{\mM} - \frac{1}{n}\mSigma_T^\dagger \mA)\mLambda, \\
%     &\vb_{LS} = \frac{1}{n}\mathds{1}_n^\mathrm{T}\mY - \vmu_G\mW_{LS},
% \end{aligned}
% \label{W_LS}
% \end{equation}
\begin{gather}
% \mW_{LS} =(\mSigma_T^\dagger \dot{\mM} - \frac{1}{n}\mSigma_T^\dagger \mA)\mLambda, \\
\mW_{LS} =\mSigma_T^\dagger \dot{\mM}\mLambda, \\
\vb_{LS} = \frac{1}{n}\mathds{1}_n^\mathrm{T}\mY - \vmu_G\mW_{LS},
\label{W_LS}
\end{gather}
% \textit{where $\mA=[\va,...,\va]\in \R^{(P\times K)}$,  $\va=\sum_{k=1}^K (\vmu_k - \bar{\vh})$, and $\mLambda=\text{diag}(n_1, \cdots, n_K)$ is a diagonal matrix.}
% $\mA=\sum_{k=1}^K (\vmu_k - \bar{\vh})\mathds{1}_K^\mathrm{T} \in \R^{P\times K}$,
\textit{where $^\dagger$ denotes the Moore-Penrose pseudoinverse,  and $\mLambda=\text{diag}(n_1, \cdots, n_K)$ is a diagonal matrix.}
\end{props}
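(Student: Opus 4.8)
The plan is to treat this as an ordinary least-squares regression of $\mY$ on the augmented design $[\mH,\ \mathds{1}_n]$ with unknown coefficients $(\mW,\vb)$, and to solve the stationarity conditions by eliminating $\vb$ first. Since $\mathcal{L}(\mH,\mW)$ is a convex quadratic in $(\mW,\vb)$, every stationary point is a global minimizer, so it suffices to set the two partial gradients to zero. Writing the residual $\mR = \mY - \mH\mW - \mathds{1}_n\vb^{\mathrm{T}}$, the condition $\partial_{\vb}\mathcal{L} = -\tfrac{1}{n}\mR^{\mathrm{T}}\mathds{1}_n = 0$ gives $n\vb^{\mathrm{T}} = \mathds{1}_n^{\mathrm{T}}\mY - \mathds{1}_n^{\mathrm{T}}\mH\mW$, that is $\vb_{LS}^{\mathrm{T}} = \tfrac{1}{n}\mathds{1}_n^{\mathrm{T}}\mY - \bar{\vh}^{\mathrm{T}}\mW$, using $\tfrac{1}{n}\mathds{1}_n^{\mathrm{T}}\mH = \bar{\vh}^{\mathrm{T}}$. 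This already reproduces the claimed bias formula (with $\vmu_G = \bar{\vh}$) once the optimal $\mW$ is identified.

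Next I would substitute this optimal bias back, which collapses the affine term into a centering operation: the residual becomes $\tilde{\mY} - \tilde{\mH}\mW$, where $\tilde{\mH} = \mH - \mathds{1}_n\bar{\vh}^{\mathrm{T}}$ and $\tilde{\mY} = \mY - \tfrac{1}{n}\mathds{1}_n\mathds{1}_n^{\mathrm{T}}\mY$ are the centered feature and label matrices. The reduced objective $\tfrac{1}{2n}\|\tilde{\mY} - \tilde{\mH}\mW\|_F^2$ is now a plain least-squares problem in $\mW$ alone, whose minimum-norm minimizer is $\mW_{LS} = (\tilde{\mH}^{\mathrm{T}}\tilde{\mH})^\dagger\tilde{\mH}^{\mathrm{T}}\tilde{\mY}$. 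It then remains only to identify the two Gram-type factors with the named quantities.

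The normal matrix is immediate: $\tilde{\mH}^{\mathrm{T}}\tilde{\mH} = (\mH - \mathds{1}_n\bar{\vh})^{\mathrm{T}}(\mH - \mathds{1}_n\bar{\vh}) = \mSigma_T$ by definition. For the cross term, since $\mathds{1}_n^{\mathrm{T}}\tilde{\mH} = 0$ the centering applied to $\mY$ drops out, leaving $\tilde{\mH}^{\mathrm{T}}\tilde{\mY} = \tilde{\mH}^{\mathrm{T}}\mY = \mH^{\mathrm{T}}\mY - \bar{\vh}\,(\mathds{1}_n^{\mathrm{T}}\mY)$. Here I would exploit that $\mY$ is one-hot: the $k$-th column of $\mH^{\mathrm{T}}\mY$ is $\sum_{y_i=k}\vh_i = n_k\vmu_k$, while the $k$-th entry of $\mathds{1}_n^{\mathrm{T}}\mY$ is $n_k$, so the $k$-th column reduces to $n_k(\vmu_k - \bar{\vh})$. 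Collecting columns gives $\tilde{\mH}^{\mathrm{T}}\mY = \dot{\mM}\mLambda$, and hence $\mW_{LS} = \mSigma_T^\dagger\dot{\mM}\mLambda$, as claimed.

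The one step requiring genuine care, rather than routine algebra, is the use of the pseudoinverse when $\mSigma_T$ is singular, which is the generic situation since the features live in $\R^P$ with $P$ possibly exceeding the rank of the centered data. I would justify that $\mW_{LS} = \mSigma_T^\dagger\dot{\mM}\mLambda$ actually solves the normal equations $\mSigma_T\mW = \tilde{\mH}^{\mathrm{T}}\mY$ by noting that the right-hand side $\dot{\mM}\mLambda = \tilde{\mH}^{\mathrm{T}}\mY$ lies in the column space of $\tilde{\mH}^{\mathrm{T}}$, which coincides with the column space of $\mSigma_T = \tilde{\mH}^{\mathrm{T}}\tilde{\mH}$; on that subspace $\mSigma_T\mSigma_T^\dagger$ acts as the identity, so the normal equations are satisfied. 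The pseudoinverse then selects the minimum-norm representative among the affine family of minimizers, and convexity secures global optimality, completing the argument.
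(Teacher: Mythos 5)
The paper does not prove this proposition; it is imported verbatim from \citet{webb1990optimised}, so there is no in-paper argument to compare against. Your derivation is correct and complete: eliminating $\vb$ via its stationarity condition, centering both $\mH$ and $\mY$, identifying $\tilde{\mH}^{\mathrm{T}}\tilde{\mH}=\mSigma_T$ and $\tilde{\mH}^{\mathrm{T}}\mY=\dot{\mM}\mLambda$ from the one-hot structure of $\mY$, and then checking consistency of the normal equations so that the pseudoinverse genuinely yields a (minimum-norm) global minimizer is exactly the standard route, and the care you take with the singular case of $\mSigma_T$ is the one step that is usually glossed over.
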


From Eq.(\ref{W_LS}), we can observe that the optimal weight matrix depends on the features and is strongly affected by $\mLambda$, i.e. the proportions of classes. Specifically, the classifier weights of the majority classes will have larger norms. The $\mathcal{NC}$ phenomena reflect the intimate connection between the last layer features and the classifier weights. Thus, skewed classifiers imply that the features are also biased, and many studies have empirically investigated that the uneven label distribution can lead to an imbalanced feature space \citep{kang2020exploring,fang2021exploring,li2022targeted}.
%Note that $\mW_{LS}$ can be interpreted as a function of $\mH$ only for balanced dataset where $n_1=...=n_K=\frac{n}{K}$ and $\mLambda=\frac{n}{K} \mI$; then $\mW_{LS} =\frac{n}{K}\mSigma_T^\dagger \dot{\mM}$. Otherwise, $\mW_{LS}$ is determined by $\mH$ and $\mLambda$ together. It demonstrates that the optimal classifiers $\mW_{LS}$ trained on the imbalanced dataset with MSE loss bias strongly in favour of those classes with larger membership. A lopsided classifiers also indicate asymmetric last-layer embedding of each class because they are intimately associated.

Particularly, \cite{fang2021exploring} show the \textit{Minority Collapse} phenomenon that reveals the skewed classifier weights encountering an imbalanced label distribution where majority classes own much more samples than the minority ones. In addition, they theoretically prove that unbiased classifiers can be obtained through over-sampling. However, empirical results show a limited performance improvement or even decline due to the over-fitting of the minority classes \citep{drummond2003c4,weiss2007cost}. 
On the other hand, the classifiers are always better tuned than the learned features \citep{thrampoulidis2022imbalance}.
% , since the classifiers receive the gradient back-propogation immediately from the loss function. 
Therefore, in this work, we mainly focus on regularizing the embeddings during training to get non-skewed and representative features. Then we tune a balanced classifier based on our well-learned features.

% \textbf{Proposition 2} (\cite{han2022neural}) \textit{MSE loss can be decompose into three terms, $\mathcal{L}(\widetilde{\mH}, \widetilde{\mW})$ = $\mathcal{L}_{NC1}$ + $\mathcal{L}_{2}$ + $\mathcal{L}_{res}$, where}
% \begin{equation}
%     \mathcal{L}_{NC1}=\frac{1}{2n}||(\widetilde{\mH}-\widetilde{\bar{\mH}})\widetilde{\mW}_{LS}||_F^2,
% \end{equation}
% \begin{equation}
%     \mathcal{L}_{2} = \frac{1}{2n}||(\mI-\widetilde{\mM}^\mathrm{T} \widetilde{\mW}_{LS}) \mLambda||_F^2,
% \end{equation}
% \begin{equation}
%     \mathcal{L}_{res}=\frac{1}{2n}||\widetilde{\mH}(\widetilde{\mW} - \widetilde{\mW}_{LS})||_F^2,
% \end{equation}
% \textit{where $\widetilde{\bar{\mH}}$ denotes the replacement of $\widetilde{\vh}_{i,k}$ in $\mH$ with the corresponding $\widetilde{\vmu}_k$.}

% Minimizing $\mathcal{L}_{NC1}$ will push the features close to the corresponding class means as much as possible. However, inconsistent with \citep{han2022neural}, minimizing $\mathcal{L}_2$ can not reflect the alignment of class means and classifiers in the case of uneven distribution of categories. Therefore, we need to re-balance the training class distribution to obtain the unbiased optimal classifier $\widetilde{\mW}_{RB}$ and the decomposition of MSE loss can still be related to $\mathcal{NC}_1$, \textbf{NC2/3}, and the residue of the trained classifier to the unbiased optimal classifier. 

\subsubsection{Importance of $\mathcal{NC}$ on Imbalanced Datasets}
As we already know that when the training set is class-imbalanced, the geometric structure of the classifiers and centered class means are not symmetric, which may introduce some bias in the model and affect the performance of the test set \citep{kang2019decoupling, kang2020exploring,fang2021exploring}. Recent work indicates that compact within-class representations along with evenly distributed class centers can help learn high-quality representations, and substantial practice confirms this \citep{li2022targeted,zhu2022balancedCL,cui2022generalized}. These intuitions lead to similar situations with $\mathcal{NC}$. In this section, we explain why the $\mathcal{NC}$ can be considered favorable representations and can provide reduced generalization errors under long-tailed distributions from the perspective of \textit{domain adaptation}.

As a standard evaluation approach in long-tailed learning, models are usually tested on balanced datasets. Since the training set is imbalanced, we can regard this scenario as a label shift domain adaptation problem, where the source domain is imbalanced, and the target domain is balanced. 

First, the following proposition shows that properties of $\mathcal{NC}_1$ and $\mathcal{NC}_2$ can be approximately preserved in the target domain.

\begin{props}
\label{prop:NCgeneralize}
\citep{galanti2021role}
Let $\mu^S_k$ (resp. $\mu^T_k$) and $\sigma^{S}_{k}$ (resp. $\sigma^{T}_{k}$) be the mean and variance of the representations of class $k$ on the source domain (resp. target domain). For any two different classes, $k$ and $k'$, with probability at least $1-\delta$ over $\mathcal{D}_S$, we have
\begin{equation}
\label{(eq:NCgener}
\frac{\sigma^{T}_{k} + \sigma^{T}_{k'}}{2 \| \mu^T_k - \mu^T_{k'} \|^2} \le (1+A^2) \bigg( \frac{\sigma^{S}_{k} + \sigma^{S}_{k'}}{2 \| \mu^S_k - \mu^S_{k'} \|^2} + B \bigg),
\end{equation}
\textit{where $A=\frac{\mathcal{O}(\sqrt{\log(1/\delta)/n_k})}{||\mu^T_k-\mu^T_{k'}||},
B=\frac{\mathcal{O}(\sqrt{\log(1/\delta)/n_k})}{||\mu^S_k-\mu^S_{k'}||^2}$.}
\end{props}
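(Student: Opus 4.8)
The plan is to prove the statement as a finite-sample concentration result, exploiting the fact that in a label-shift problem the class-conditional distributions $P(X \mid Y=k)$ are \emph{identical} in the source and target domains; only the label marginal changes. Since the feature extractor $g_\theta$ is fixed after training, the true (population) class mean and variance of $g_\theta(X)$ coincide across the two domains, and the apparent source--target gap is purely the estimation error incurred because the source statistics $\mu^S_k,\sigma^S_k$ are computed from only $n_k$ samples, whereas $\mu^T_k,\sigma^T_k$ play the role of the population quantities. This reframing turns the claim into a pair of concentration bounds together with a perturbation argument for the class-distance normalized variance $\frac{\sigma_k+\sigma_{k'}}{2\|\mu_k-\mu_{k'}\|^2}$.

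First I would establish \emph{mean concentration}: assuming the features are bounded (or sub-Gaussian), a vector Hoeffding/Bernstein inequality gives $\|\mu^S_k-\mu^T_k\|\le \mathcal{O}(\sqrt{\log(1/\delta)/n_k})$ for each class with probability $1-\delta$, after a union bound over $k,k'$. Second, I would establish \emph{variance concentration}: writing the empirical variance via the bias--variance decomposition around the population mean, $\sigma^S_k = \frac1{n_k}\sum_{y_i=k}\|\vh_i-\mu^T_k\|^2 - \|\mu^S_k-\mu^T_k\|^2$, the empirical second moment concentrates around $\sigma^T_k$ at the same rate, so that $|\sigma^S_k-\sigma^T_k|\le \mathcal{O}(\sqrt{\log(1/\delta)/n_k})$; the squared mean-shift correction is of strictly smaller order and is absorbed into the same $\mathcal{O}(\cdot)$.

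The final step is the algebraic combination. In the numerator I use the variance bound, $\sigma^T_k+\sigma^T_{k'}\le \sigma^S_k+\sigma^S_{k'}+\mathcal{O}(\sqrt{\log(1/\delta)/n_k})$. In the denominator I control the squared between-class distance: writing $\mu^S_k-\mu^S_{k'} = (\mu^T_k-\mu^T_{k'})+(e_k-e_{k'})$ with $\|e_k-e_{k'}\|\le \mathcal{O}(\sqrt{\log(1/\delta)/n_k})$ and applying the triangle inequality, I obtain $\|\mu^T_k-\mu^T_{k'}\|^{-2}\le (1+A^2)\,\|\mu^S_k-\mu^S_{k'}\|^{-2}$, where $A=\mathcal{O}(\sqrt{\log(1/\delta)/n_k})/\|\mu^T_k-\mu^T_{k'}\|$ is exactly the relative perturbation of the gap. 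Multiplying the numerator and denominator bounds and dividing the residual variance error by $\|\mu^S_k-\mu^S_{k'}\|^2$ produces the additive term $B$, which yields the stated inequality.

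I expect the main obstacle to be the bookkeeping in this last step: getting the perturbation of the \emph{squared} gap to collapse into the clean multiplicative factor $(1+A^2)$ requires controlling the adversarial case in which $e_k-e_{k'}$ is nearly anti-aligned with $\mu^T_k-\mu^T_{k'}$, which inflates $\|\mu^S_k-\mu^S_{k'}\|^{-2}$; this is ruled out precisely in the small-$A$ regime, i.e.\ when the between-class separation dominates the sampling error. A secondary subtlety is that the rate $\sqrt{\log(1/\delta)/n_k}$ is governed by the \emph{minority} class count $\min(n_k,n_{k'})$, so the bound is informative only when even the smaller class has enough samples to resolve the gap; this is exactly where inducing $\mathcal{NC}$ on the source (large $\|\mu_k-\mu_{k'}\|$, small $\sigma_k$) drives $A$ and $B$ toward zero and makes the target-domain guarantee nontrivial.
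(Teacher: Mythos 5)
The paper does not actually prove this proposition: it is imported verbatim from \citet{galanti2021role}, so there is no in-paper argument to compare yours against, and your reconstruction follows the same general route as that source --- identify the target statistics with population quantities (legitimate under the paper's label-shift reading, since the class-conditional feature laws for a fixed $g_\theta$ then agree across domains), concentrate the empirical class means and variances at rate $\mathcal{O}(\sqrt{\log(1/\delta)/n_k})$, and propagate the perturbation through the class-distance-normalized variance. Your remarks that the rate is really governed by $\min(n_k,n_{k'})$, that a union bound over the pair is needed, and that a boundedness or sub-Gaussianity assumption must be added (the statement carries no moment hypothesis) are all correct and all suppressed in the statement as reproduced here.

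One step does not go through as written, and it is the one you yourself flag as the main obstacle. From $\|(\mu^S_k-\mu^S_{k'})-(\mu^T_k-\mu^T_{k'})\|\le\epsilon$ with $\epsilon=\mathcal{O}(\sqrt{\log(1/\delta)/n_k})$, the triangle inequality gives
\[
\|\mu^S_k-\mu^S_{k'}\|^2\;\le\;\bigl(\|\mu^T_k-\mu^T_{k'}\|+\epsilon\bigr)^2\;=\;\|\mu^T_k-\mu^T_{k'}\|^2\,(1+A)^2 ,
\]
a multiplicative factor $1+2A+A^2$, not $1+A^2$. The cross term $2A$ scales as $n_k^{-1/2}\|\mu^T_k-\mu^T_{k'}\|^{-1}$ while $A^2$ scales as $n_k^{-1}\|\mu^T_k-\mu^T_{k'}\|^{-2}$, so no choice of the hidden constant inside $A$ absorbs the former into the latter; your claim that the triangle inequality yields the factor $(1+A^2)$ ``exactly'' is unjustified, and what your argument actually proves is the version with $(1+A)^2$ --- asymptotically equivalent, since both factors tend to $1$, but not the stated constant. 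Note also that you worry about the wrong adversarial direction: for the needed inequality $\|\mu^S_k-\mu^S_{k'}\|^2\le(1+\cdot)\,\|\mu^T_k-\mu^T_{k'}\|^2$ the dangerous event is the sampling error being \emph{aligned} with the class gap (overestimating the separation); the anti-aligned case only enlarges the right-hand side of the proposition and costs nothing in the proof, though it can render the bound vacuous. Either prove and state the $(1+A)^2$ form honestly or supply the extra argument that removes the first-order cross term; the remaining pieces of your outline (mean and variance concentration, the bias--variance bookkeeping for $\sigma^S_k$, and the numerator step) are sound.
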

% Although $n_k$ can be small for some classes, 
The ETF geometry of $\{\mu_k\}_{k=1}^K$ indicates that the distance $\| \mu^S_k - \mu^S_{k'} \|$ achieve maximum value for all $k\neq k'$. On the other hand, $\| \mu^T_k - \mu^T_{k'} \|$ is also lower bounded by $\| \mu^S_k - \mu^S_{k'} \|$ \citep{galanti2021role}. Hence, $A$ and $B$ are upper bounded and diminish to zero as $n_k$ gets larger. Therefore, we can roughly speak $\mathcal{NC}_1$ and $\mathcal{NC}_2$ can generalize to the target domain.

We then illustrate how the existence of $\mathcal{NC}_1$ and $\mathcal{NC}_2$ help to reduce the generalization error. According to \cite{ben2006analysisDA}, for any classifier $h$, the error on target domain $\epsilon_T(h)$ will be bound by the empirical error on the source domain and the divergence between source and target feature domains plus a constant:
\begin{equation}
\label{eq: da_bound}
    \epsilon_T(h) \leq \hat{\epsilon}_S (h) + d_\mathcal{H} (\mathcal{D}_S^Z, \mathcal{D}_T^Z) + \textrm{const},
\end{equation}
where $d_\mathcal{H} (\mathcal{D}_S^Z, \mathcal{D}_T^Z)$\footnote{$d_\mathcal{H} (\mathcal{D}_S^Z, \mathcal{D}_T^Z)$ denotes the $\mathcal{H}$-divergence between $\mathcal{D}_S^Z$ and $\mathcal{D}_T^Z$, a precise definition is provided in \cite{ben2010theory}.} measures some `distance' between source and target domains over the feature space $\mathcal{Z}$. 
Although not exactly the same, substituting $d_\mathcal{H}$ with the Jensen–Shannon distance $d_{JS}$ \citep{endres2003new} will not significantly change the result. Theoretically, minimizing $d_{JS}$ between source and target distributions will reduce the right-hand side of Eq.(\ref{eq: da_bound}) as well. Let $\mathcal{D}^Z$ and $\mathcal{D}^Y$ be the distributions defined over the latent feature space and label space, respectively. As $\mathcal{D}^Y$ can be induced from $\mathcal{D}^Z$ from a generative perspective, according to \cite{zhao2019invariant_DA}, we have 
\begin{equation}
\label{feature_dis}
    d_{JS}(\mathcal{D}_S^Z, \mathcal{D}_T^Z) \geq d_{JS}(\mathcal{D}_S^Y, \mathcal{D}_T^Y),
\end{equation}
i.e., $d_{JS}(\mathcal{D}_S^Z, \mathcal{D}_T^Z)$ is the lower bounded by $d_{JS}(\mathcal{D}_S^Y, \mathcal{D}_T^Y)$, which is a constant determined by source and target label distributions. 

With $\mathcal{NC}_1$ and $\mathcal{NC}_2$, the distribution over $\mathcal{Z}$ collapses to a $K$-component mixture Dirac distribution. More precisely, we have $\Pr(Z = \vh)=\Pr(Y = \vy)$. In this case, $d_{JS}(\mathcal{D}_S^Z, \mathcal{D}_T^Z)$ attains its lower bound $d_{JS}(\mathcal{D}_S^Y, \mathcal{D}_T^Y)$, which is the objective of some classical domain adaptation algorithms \citep{pmlr-v37-long15, DBLP:series/acvpr/GaninUAGLLML17},

\section{LEARNING REPRESENTATION VIA INDUCING NEURAL COLLAPSE}

The previous analysis inspires us to induce $\mathcal{NC}$ phenomena to imbalanced training. We mainly focus on the core properties, $\mathcal{NC}_1$ and $\mathcal{NC}_2$, and come up with two corresponding regularization terms.
%Our proposed method attempts to bootstrap the $\mathcal{NC}$ phenomena that can also occur in the case of imbalanced training sets. We are mainly concerned with $\mathcal{NC}_1$ and $\mathcal{NC}_2$, the main properties of $\mathcal{NC}$. We introduce two explicit regularization terms to obtain close and distinctive representations.

\subsection{Feature Regularization}
\paragraph{Compact within-class features.}
%Imbalance destroy the intra-class variances collapse, thus the first term is to recover this 
$\mathcal{NC}_1$ underlines that the model is seeking to learn compact within-class features by pushing the last-layer embedding to be close to their class centers, which seems natural but actually hard to achieve in practice. \cite{han2022neural} decomposed the MSE loss and discovered that the loss in the late training stages is dominated by the $\ell_2$-distance between the feature and the corresponding class center. This indicates that although $\mathcal{NC}_1$ is the inevitable trend, it is quite difficult to realize.
% observed that the loss of the deep classification model in the later stages of training is dominated by the distance between the features and their class centers.
Therefore, we add explicit regularization to make $\mathcal{NC}_1$ more inclined to appear.
Especially, for the class-imbalanced dataset, we consider the inverse ratio of class sizes as weights to avoid excessive force on the majority classes. This indicates the difference between our $\mathcal{NC}_1$ regularization and the \textit{center loss} \citep{wen2016discriminative} that pushes all features equally to their class center. Formally, we define the $\mathcal{NC}_1$ regularization as the within-class feature distance, $\mathcal{L}_{W}$, with the formula of
\begin{equation}
    \mathcal{L}_{W} = \sum_{k=1}^K\sum_{y_i=k} \frac{1}{n_k}||\vh_{i}-\vmu_k||_2^2.
\end{equation}
\paragraph{Distinct between-class features.}
$\mathcal{NC}_2$ shows that with balanced class distribution, all pairs of centered class means tend to form equal-sized angles, implying the maximally separated between-class features. However, under the imbalanced distribution, the class centers of the minority classes are close to the majority ones, leading to indistinguishable features. Therefore, we propose $\mathcal{NC}_2$ regularization to minimize the maximal pairwise cosine similarity between all the centered class means, equivalent to maximizing the minimal pairwise angle. Consider the angular version, the objective of $\mathcal{NC}_2$ regularization is:
% \begin{equation}
%     \min \max \limits_{k, k' \in [1,K],k \neq k'} \frac{<\vmu_k-\vmu_C, \vmu_{k'}-\vmu_C>}{||\vmu_k-\vmu_C|| ||\vmu_{k'}-\vmu_C||},
% \end{equation}
\begin{equation}
    \max \min \limits_{k \neq k'} \arccos \frac{\langle\dot{\vmu}_k, \dot{\vmu}_{k'}\rangle}{||\dot{\vmu}_k|| \cdot ||\dot{\vmu}_{k'}||},
\end{equation}
where $\dot{\vmu}_k=\vmu_k-\vmu_C$.
% Although minimize the maximal cosine similarity has the equivalent outcome, we chose the angular version since it is more robust to optimize \citep{wang2020mma}. Besides,
As noted in \cite{wang2020mma}, updating the average of each vector's maximum cosine is more efficient than just optimizing the global maximum cosine. Therefore, we define the formula for the $\mathcal{NC}_2$ regularization as
\begin{equation}
    % \mathcal{L}_{B} = -\frac{1}{K}\sum_{k=1}^K \min \limits_{k',k' \neq k}\arccos \frac{<\vmu_k-\vmu_C, \vmu_{k'}-\vmu_C>}{||\vmu_k-\vmu_C|| \cdot ||\vmu_{k'}-\vmu_C||}.
    \mathcal{L}_{B} = -\frac{1}{K}\sum_{k=1}^K \min \limits_{k',k' \neq k}\arccos \frac{\langle\dot{\vmu}_k, \dot{\vmu}_{k'}\rangle}{||\dot{\vmu}_k|| \cdot ||\dot{\vmu}_{k'}||}.
\end{equation}
In summary, our proposed feature regularization includes two terms, $\mathcal{L}_{W}$ and $\mathcal{L}_{B}$, corresponding to minimize the within-class distance and maximize the between-class discrepancy, respectively. They can be easily coupled with supervised losses with a linear classifier to regularize the penultimate layer embedding. Finally, we have the following loss for training:
\begin{equation}
    \mathcal{L} = \mathcal{L}_{sup} + \lambda_1 \mathcal{L}_{W} + \lambda_2 \mathcal{L}_{B}.
\end{equation}
where $\mathcal{L}_{sup}$ denotes the supervised loss, \textit{e.g.} CE loss and MSE loss. $\lambda_1$ and $\lambda_2$ are hyperparameters that control the impact of $\mathcal{L}_{W}$ and $\mathcal{L}_{B}$. 
% (need to further prove that $||\vmu_1-\vmu_C||=...= ||\vmu_K-\vmu_C||$, so that $\bar{\mM}=[\vmu_1-\vmu_C,...,\vmu_K-\vmu_C]$ also form a Simplex ETF, then we can say $\mathcal{R}_{NC2} + \mathcal{R}_{NC2}$ can lead to $\mathcal{NC}_2$.)
\subsection{Occurrence of Neural Collapse}
First, we illustrate that $\mathcal{L}_{B}$ will lead all pairs of the $K$ class means to have the same cosine equals to $-\frac{1}{K-1}$, with the following proposition.
\begin{props}
\label{props:max_angle}
\citep{wang2020mma} The minimum of the maximal pair-wise cosine similarity between $n$ vectors is $\frac{-1}{n-1}$, which can be reached when the vectors have an equal-sized pair-wise angle and zero mean.
\end{props}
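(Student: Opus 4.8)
The plan is to reduce the whole statement to a single inequality coming from the nonnegativity of a squared norm. Since the pairwise cosine similarity is invariant under positive rescaling of each vector, I would first assume without loss of generality that the $n$ vectors $\vv_1,\dots,\vv_n$ are nonzero and normalized to unit length, so that $\cos(\vv_i,\vv_j)=\langle \vv_i,\vv_j\rangle$. The key observation is then
\[
0 \le \Big\| \sum_{i=1}^n \vv_i \Big\|^2 = \sum_{i=1}^n \|\vv_i\|^2 + \sum_{i \neq j} \langle \vv_i, \vv_j \rangle = n + \sum_{i \neq j} \cos(\vv_i, \vv_j),
\]
which forces $\sum_{i\neq j}\cos(\vv_i,\vv_j) \ge -n$.

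Next I would pass from this bound on the sum to a bound on the maximum. There are $n(n-1)$ ordered pairs with $i\neq j$, so the maximal pairwise cosine is at least the average of all pairwise cosines:
\[
\max_{i \neq j}\cos(\vv_i,\vv_j) \ge \frac{1}{n(n-1)}\sum_{i\neq j}\cos(\vv_i,\vv_j) \ge \frac{-n}{n(n-1)} = \frac{-1}{n-1}.
\]
Since this holds for every configuration of $n$ vectors, the minimum over configurations of the maximal pairwise cosine is bounded below by $-1/(n-1)$, which is the lower-bound half of the claim.

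Finally, I would verify that this bound is attained and identify exactly when. Equality in the averaging step requires all pairwise cosines to equal a common value $c$, and equality in the norm inequality requires $\sum_i \vv_i = \vzero$, i.e.\ zero mean; these are precisely the two conditions stated in the proposition. Substituting both back into $\|\sum_i \vv_i\|^2 = n + n(n-1)c = 0$ then yields $c = -1/(n-1)$. To exhibit a configuration realizing both conditions I would point to the vertices of a simplex ETF (Definition~\ref{def:ETF}): whenever the ambient dimension is at least $n-1$, such a collection of $n$ equiangular unit vectors with zero sum exists, and for it every pairwise cosine equals $-1/(n-1)$.

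The only real subtlety is the attainability direction. The lower bound is a one-line norm argument, but to assert that $-1/(n-1)$ is the actual minimum I must confirm that a configuration meeting both equality conditions genuinely exists in the relevant space. This is where the simplex ETF enters, and it implicitly requires the feature dimension $P$ (here $P \ge K-1$ for the $n=K$ centered class means) to accommodate $n$ such vectors; I would state this dimensional hypothesis explicitly rather than leave it implicit.
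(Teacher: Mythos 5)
Your proof is correct; the paper itself gives no proof of this proposition, deferring entirely to the citation \citep{wang2020mma}, and your argument --- nonnegativity of $\bigl\|\sum_{i=1}^n \vv_i\bigr\|^2$ plus the fact that a maximum dominates an average, with the two equality conditions forcing equiangularity and zero mean and hence the common cosine $-1/(n-1)$ --- is the standard derivation of this classical simplex bound. Your closing caveat that attainability requires the ambient dimension to be at least $n-1$ (here $P \ge K-1$ for the $K$ centered class means) is a genuine hypothesis that both the statement and the paper leave implicit, so flagging it is an improvement rather than a gap.
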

% maximize the minimum pair-wise angle of the $K$ class means will lead  all pairs of the $K$ class means form equal-sized angles with the cosines of $-\frac{1}{K-1}$.
Therefore, denote $\hat{\mM}=[\frac{\dot{\vmu}_1}{||\dot{\vmu}_1||},...,\frac{\dot{\vmu}_K}{||\dot{\vmu}_K||}]$. Recall that the objective of $\mathcal{L}_B$ is to minimize the maximal pair-wise cosine similarity of the centered class means, thus with $\mathcal{L}_B$, we have 
\begin{equation}
    \hat{\mM}^\mathrm{T}\hat{\mM} = \frac{K}{K-1}\mI-\frac{1}{K-1}\1_K \1_K^\mathrm{T}.
\end{equation}

According to \textbf{Definition} \ref{def:ETF}, $\hat{\mM}$ form a simplex ETF. Furthermore, although $\mathcal{L}_{W}$ and $\mathcal{L}_{B}$ do not explicitly enforce the centered class means to have an equal norm, we empirically observe this desired result (see the experimental result in \textbf{Section} \ref{sec:ft_analyze}). 
Let $||\dot{\vmu}_1||= \cdots = ||\dot{\vmu}_K||=\alpha$ and $\bar{\mM}=[\dot{\vmu}_1, \cdots ,\dot{\vmu}_K]$, then we have
\begin{equation}
    \bar{\mM}^\mathrm{T}\bar{\mM} = \alpha  \left(\frac{K}{K-1}\mI-\frac{1}{K-1}\1_K \1_K^\mathrm{T}\right),
\end{equation}
indicating the centered class means indeed from an ETF.
Therefore, with the proposed feature regularization terms $\mathcal{L}_{W}$ and $\mathcal{L}_{B}$, $\mathcal{NC}_1$ and $\mathcal{NC}_2$ can happen even when the training set is imbalanced. 

In addition, we can prove that with the existence of $\mathcal{NC}_1$ and $\mathcal{NC}_2$, retrain the classifier with class-balance sampling, the classifier can become parallel with the centered feature mean, indicating the self-duality ($\mathcal{NC}_3$). Ultimately, the symmetric structure of the regularized class means brings about an unbiased linear classifier.

\begin{props}
\label{prop:NC12-3}
\textbf{Proposition} \ref{prop:W_LS}+$\mathcal{NC}_{1}$+$\mathcal{NC}_{2}$+class-balanced sampling can lead to $\mathcal{NC}_{3}$.
\end{props}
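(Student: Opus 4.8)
The plan is to substitute the three hypotheses directly into the closed-form optimal classifier of \textbf{Proposition} \ref{prop:W_LS} and show that the resulting weight matrix is a scalar multiple of the centered class-mean matrix. First I would invoke $\mathcal{NC}_1$ to collapse every feature onto its class center, so that each row of $\mH$ equals $\vmu_{y_i}$; this turns the total covariance into $\mSigma_T = \sum_{k=1}^K n_k (\vmu_k - \bar{\vh})(\vmu_k - \bar{\vh})^\mathrm{T}$. Next I would impose class-balanced sampling, which sets $n_k = n/K$ for every $k$. This has two effects: the global mean $\bar{\vh}$ coincides with the arithmetic class-center mean $\vmu_C$, and $\mLambda = \mathrm{diag}(n_1,\dots,n_K) = \tfrac{n}{K}\mI$. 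Consequently $\mSigma_T = \tfrac{n}{K}\dot{\mM}\dot{\mM}^\mathrm{T}$ with $\dot{\mM} = [\vmu_1-\vmu_C,\dots,\vmu_K-\vmu_C]$, and substituting into $\mW_{LS} = \mSigma_T^\dagger \dot{\mM}\mLambda$ the scalar factors cancel, leaving the clean identity $\mW_{LS} = (\dot{\mM}\dot{\mM}^\mathrm{T})^\dagger \dot{\mM}$.

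The second step is to exploit $\mathcal{NC}_2$ to evaluate this pseudoinverse expression. Writing the thin SVD $\dot{\mM} = \mU\mS\mV^\mathrm{T}$ (of rank $K-1$, since the centered means sum to zero), one obtains $(\dot{\mM}\dot{\mM}^\mathrm{T})^\dagger \dot{\mM} = \mU\mS^{-1}\mV^\mathrm{T}$. The key observation is that the ETF Gram identity $\dot{\mM}^\mathrm{T}\dot{\mM} = \alpha\big(\tfrac{K}{K-1}\mI - \tfrac{1}{K-1}\mathds{1}_K\mathds{1}_K^\mathrm{T}\big)$ forces all $K-1$ nonzero eigenvalues to equal $\alpha\tfrac{K}{K-1}$, so every nonzero singular value of $\dot{\mM}$ is identical, i.e. $\mS = \sqrt{\alpha K/(K-1)}\,\mI$. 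Then $\mU\mS^{-1}\mV^\mathrm{T} = \tfrac{K-1}{\alpha K}\,\mU\mS\mV^\mathrm{T} = \tfrac{K-1}{\alpha K}\dot{\mM}$, which columnwise gives $\vw_k = \tfrac{K-1}{\alpha K}(\vmu_k - \vmu_C)$ — exactly the self-duality $\mathcal{NC}_3$ with proportionality constant $\tfrac{K-1}{\alpha K}$.

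I expect the main obstacle to be the careful handling of the rank-deficient pseudoinverse in the second step: because $P$ may exceed $K-1$, the matrix $\dot{\mM}\dot{\mM}^\mathrm{T}$ is singular, so one cannot use an ordinary inverse and must argue through the SVD (or, equivalently, restrict to the range of $\dot{\mM}$). The crucial fact that makes everything collapse to a single scalar is that the ETF has a flat nonzero spectrum; I would isolate this as a short lemma, namely that equal nonzero singular values imply $(\dot{\mM}\dot{\mM}^\mathrm{T})^\dagger\dot{\mM}\propto\dot{\mM}$, and verify it via the SVD. A secondary point worth checking explicitly is that balanced sampling really yields $\bar{\vh}=\vmu_C$, so that the $\dot{\mM}$ appearing in \textbf{Proposition} \ref{prop:W_LS} agrees with the centered-mean matrix $\bar{\mM}$ on which $\mathcal{NC}_2$ is stated; once these are aligned, the remaining algebra is routine.
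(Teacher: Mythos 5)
Your proposal is correct and follows the same skeleton as the paper's proof: substitute $\mathcal{NC}_1$ and balanced sampling into the closed form of Proposition~\ref{prop:W_LS} to reduce the retrained classifier to $(\dot{\mM}\dot{\mM}^\mathrm{T})^\dagger \dot{\mM}$, then use $\mathcal{NC}_2$ to show this is proportional to $\dot{\mM}$. Where you diverge is in the execution of that last step. The paper inserts the identity $\dot{\mM}=\dot{\mM}\dot{\mM}^\mathrm{T}(\dot{\mM}^\mathrm{T})^\dagger$ to rewrite the expression as $(\dot{\mM}^\mathrm{T})^\dagger$ and then cites \cite{papyan2020prevalence} for the fact that the pseudoinverse of an ETF is proportional to the ETF itself; you instead compute the pseudoinverse directly from the thin SVD, using the observation that the ETF Gram matrix has a flat nonzero spectrum so that $\mS^{-1}\propto\mS$. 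Your route is more self-contained (it proves rather than cites the ETF-pseudoinverse fact), handles the rank deficiency of $\dot{\mM}\dot{\mM}^\mathrm{T}$ explicitly, and produces the explicit proportionality constant $\tfrac{K-1}{\alpha K}$; you are also slightly more careful than the paper in retaining the $\tfrac{n}{K}$ factor in $\mSigma_T=\tfrac{n}{K}\dot{\mM}\dot{\mM}^\mathrm{T}$ and noting that it cancels against $\mLambda=\tfrac{n}{K}\mI$ (the paper drops this factor, which is harmless since the claim is only up to a scalar). The paper's version is shorter at the cost of leaning on an external fact; both are valid.
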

\begin{proof}
With class-balanced sampling, the training label distribution can be regarded as balanced, and $\dot{\mM}=\bar{\mM}$. Then the optimal re-trained classifier $\mW_{r}$ is
\begin{equation}
    \mW_{r} =\frac{n}{K} \mSigma_T^\dagger \dot{\mM}, 
\end{equation}
with the existence of $\mathcal{NC}_1$, we have  $\mSigma_T=\dot{\mM}\dot{\mM}^\mathrm{T}$. Thus,
\begin{equation}
\begin{aligned}
    \mW_{r} & =\frac{n}{K}(\dot{\mM}\dot{\mM}^\mathrm{T})^\dagger \dot{\mM} \\
    & = \frac{n}{K}(\dot{\mM}\dot{\mM}^\mathrm{T})^\dagger \dot{\mM}\dot{\mM}^\mathrm{T} (\dot{\mM}^\mathrm{T} )^\dagger \\
    & = \frac{n}{K}(\dot{\mM}^\mathrm{T} )^\dagger, \nonumber
\end{aligned}
\end{equation}
with $\mathcal{NC}_2$ which implies that $\dot{\mM}$ form a simplex-ETF, thus, $(\dot{\mM}^\mathrm{T} )^\dagger=c \dot{\mM}$ for some constant $c$ \citep{papyan2020prevalence}, then we can obtain $\mW_{r} = \alpha \dot{\mM} \nonumber$, demonstrating the asserted self-duality ($\mathcal{NC}_3$).
% \begin{equation}
%     \mW_{r} = c \dot{\mM} \nonumber
%     \label{WLSr}
% \end{equation}
\end{proof}
In conclusion, with the proposed $\mathcal{L}_W$ and $\mathcal{L}_B$, we can obtain \textit{compact within-class} and \textit{distinct between-class} representations under imbalanced-class distribution. In line with linear discriminant analysis (LDA) \citep{fisher1936use}, this provides an optimal solution for the linear classifier.
%For the camera-ready paper, if you are using \LaTeX, please make sure
%that you follow these instructions.  
% (If you are not using \LaTeX,
%please make sure to achieve the same effect using your chosen
%typesetting package.)

\section{EXPERIMENTS}
\subsection{Classification and long-tailed recognition}
In this section, we conduct various experiments on image classification tasks on both balanced and long-tailed datasets to validate the effectiveness of our method. We denote our approach as NC, indicating the occurrence of the $\mathcal{NC}$ phenomena. By default, CE is adopted as $\mathcal{L}_{sup}$.

\subsubsection{Experiment Setup}
\paragraph{Datasets.}
%CIFAR10 and CIFAR100 have 50,000 training samples and 10,000 validation samples across 10 and 100 classes
Two balanced datasets (CIFAR10 and CIFAR100) and three long-tailed datasets (CIFAR10-LT, CIFAR100-LT, and ImageNet-LT) are used in our experiments. Following \cite{cao2019learning},  CIFAR10/100-LT are created by downsampling each class's samples to obey an exponential decay with an imbalance ratio $r=100$ and $10$. Here $r=\max\{n_k\} / \min \{n_k\}$. ImageNet-LT \citep{liu2019large}, including 115,846 samples and 1,000 categories with size ranging from 5 to 1,280, is generated from the ImageNet-2012 \citep{deng2009imagenet} dataset using a Pareto distribution with the power value $\alpha = 6$. 
% iNauralist2018 \citep{van2018inaturalist} is a natural species recognition dataset with the inherent long-tailed distribution. It consists of 437,513 images from 8,142 categories.

\paragraph{Baselines.} 
In addition to the typical approaches for addressing imbalanced data, such as re-sampling (RS) and re-weighting (RW) in inverse proportion to the class size, the investigation of more conducive methods that decouple representation learning and classifier training, as well as relevant methods inspired by $\mathcal{NC}$, are also carried out.
To be specific, we compare traditional supervised learning methods with DRW \citep{cao2019learning}, LWS \citep{kang2019decoupling}, and cRT \citep{kang2019decoupling}, and two recent works, namely BBN \citep{zhou2020bbn} and MiSLAS \citep{zhong2021improving}. Our comparison also includes supervised contrastive learning approaches, namely FCL \citep{kang2020exploring}, KCL \citep{kang2020exploring}, and TSC \citep{li2022targeted}.  In addition, the comparison involves $\mathcal{NC}$-inspired methods such as ETF classifier+DR \citep{yang2022we} and ARB-Loss \citep{xie2023neural}.

%those single model based methods also focus on representation learning. Generally, considering representation learning and classifier training separately can achieve superior results. Therefore, for the sake of fairness,  
\paragraph{Implementation details.}
We mainly follow the common training protocol. In all experiments, we adopt SGD optimizer with the momentum of 0.9, weight decay of 0.005, and train the model for 200 epochs following \cite{alshammari2022long}. 
We utilize mix-up \citep{zhang2018mixup} during the representation learning stage for all datasets.
For CIFAR10/100(-LT), we use ResNet-32 \citep{he2016deep} as the backbone and a multi-step schedule that decays the learning rate as its 0.1 at the 160-th and 180-th epochs with initialization of 0.1. 
We use 4 GeForce GTX 2080Ti GPUs with a batch size of 128.
For ImageNet-LT, we use ResNeXt-50 \citep{xie2017aggregated} as the backbone and cosine schedule that gradually decays the learning rate from 0.05 to 0. We use 4 Tesla V100 GPUs to train the models with a batch size of 256. 
We also adopt Randaugment \citep{cubuk2020randaugment} for ImageNet-LT. 
We report the average results of three independent trials with different random seeds. Our code is available at \href{https://github.com/Pepper-lll/NCfeature}{https://github.com/Pepper-lll/NCfeature}.
% We find that a smaller learning rate with more epochs can lead to better results on ImageNet-LT. 

The hyperparameters $\lambda_1$ and $\lambda_2$ need to be adjusted according to the complexity of the datasets. In general, simple datasets with few categories require a small magnitude of feature regularization, while for complex datasets with plenty of categories, we need larger $\lambda_1$ and $\lambda_2$. Besides, similar to \cite{li2022targeted}, we also find that it is better to regularize the feature learning from half of the training process for large-scale datasets, i.e., CIFAR100 and ImageNet-LT. Our hyperparameter settings and the epoch number to start feature regularization are summarized in Table \ref{Tab:hyper_setting}. 

The class centers $\{\vmu_k\}_{k=1}^K$ are updated in each mini-batch, instead of in the entire training set, which has been proved not efficient in large-scale datasets \citep{wen2016discriminative}. Besides, our regularization terms are better to combine with re-balancing strategies to ensure the matching between classifier weights and class centers. The combination can lead to a remarkable improvement. In our experiments, we choose DRW and cRT as the re-balancing strategies.

\begin{table}[ht]
\small
\centering
\caption{Hyperparameter setting.}
\label{Tab:hyper_setting}
\vspace{1mm}
\begin{tabular}{c|ccc}
\toprule
Dateset & $\lambda_1$ & $\lambda_2$ & start epoch\\
\midrule
CIFAR10(-LT)  & 0.01 & 0.1 & 0 \\
CIFAR100(-LT) & 0.01 & 0.5 & 100 \\
ImageNet-LT   & 0.05 & 1.0 & 100  \\
\bottomrule
\end{tabular}
\vspace{-3mm}
\end{table}

% \begin{table}[]
% \begin{tabular}{lll}
% \toprule
% Method & CIFAR10-LT & CIFAR100-LT \\ \midrule
% CE & 70.4 & 38.4 \\
% CB-Focal$^\dagger$ & 74.6 & 39.6 \\
% CE-DRW$^\dagger$ & 75.1 & 40.5 \\
% LDAM-DRW$^\dagger$ & 77.0 & 42.0 \\
% KCL$^\dagger$ & 77.6 & 42.8 \\
% TSC$^\dagger$ & 79.7 & 43.8 \\
% ours & 80.1 & 44.1 \\ \bottomrule
% \end{tabular}
% \end{table}

\subsubsection{Results}
\paragraph{Balanced data.}
As we mentioned before, our method is applicable to both balanced and imbalanced datasets. First, we conduct experiments to validate our model on balanced CIFAR10 and CIFAR100 datasets. Table \ref{balance_res} shows that our method can reduce the generalization error with both CE and MSE loss.

\paragraph{Imbalanced data.}
Table \ref{cifar-table} and \ref{imgnet-lt} present our results on CIFAR10-LT, CIFAR100-LT, and ImageNet-LT. We can find that our method surpasses existing methods on all three datasets. For ImageNet-LT, we further test the accuracy on three groups of classes according to the sample size, including Many-shot ($>$100 samples), Medium-shot (20$\sim$100 samples), and Few-shot ($<$20 samples). The results show that our method can substantially improve the accuracy of the Medium- and Few-shot categories with almost no impact on the accuracy of the Many-shot categories compared to the plain training with CE.

\begin{table}[ht]
\small
\vspace{-2mm}
\centering
\caption{Top-1 test accuracy (\%) on the balanced datasets.}
\label{balance_res}
\vspace{2mm}
\begin{tabular}{cccc}
\toprule
Method & CIFAR10 & CIFAR100  \\
\midrule
CE & \textbf{93.4} & 71.8   \\
+NC & 93.3 &\textbf{72.1}   \\ \midrule
MSE & 91.1 & 70.7  \\
+NC & \textbf{91.7} & \textbf{71.9}  \\
\bottomrule
\end{tabular}
\vspace{-3mm}
\end{table}

\begin{table}[ht]
\small
\vspace*{-3mm}
\centering
\caption{Top-1 test accuracy (\%) on CIFAR10-LT and CIFAR100-LT. The results of the compared methods are obtained
from their respective original papers. The best and second-best results are marked in bold and underlined.}
\label{cifar-table}
\vspace{2mm}
\begin{tabular}{ccccc}
\toprule
Method & \multicolumn{2}{c}{CIFAR10-LT} & \multicolumn{2}{c}{CIFAR100-LT} \\ \midrule
imbalance ratio & 100 & 10 & 100 & 10 \\
\midrule
CE & 70.4 & 86.4 & 38.4 & 55.7 \\
% CB-Focal & 74.6 & 87.1 & 39.6 & 58.0 \\
CE-RS & 72.8 & 87.8 & 36.7 & 57.7 \\
CE-RW & 74.4 & 87.9 & 32.5 & 58.2 \\
CE-DRW & 75.1 & 86.4 & 42.5 & 56.2 \\
LDAM-DRW & 77.0 & 88.2 & 43.5 & 58.7 \\ 
BBNm& 79.9 & 88.4 & 42.6 & 59.2 \\
MiSLAS & 82.1 & 90.0 & 47.0 & \underline{63.2} \\ \midrule
KCL & 77.6 & 88.0 & 42.8 & 57.6 \\
TSC & 79.7 & 88.7 & 43.8 & 59.0 \\ \midrule
ETF classifier+ DR & 76.5 & 87.7 & 45.3 & - \\
ARB-Loss & \textbf{83.3} & \textbf{90.2} & 47.2 & 62.1 \\ \midrule
NC-DRW & 81.9 & \underline{89.8} & \underline{48.6} & 63.1 \\
NC-DRW-cRT & \underline{82.6} & \textbf{90.2} & \textbf{48.7} & \textbf{63.6}\\
\bottomrule
\end{tabular}
\vspace{-3mm}
\end{table}

\begin{table}[ht]
\small
\centering
\caption{Top-1 test accuracy (\%) on ImageNet-LT.}
% Our method outperforms previous state-of-the-art single-model methods.
\label{imgnet-lt}
\vspace{2mm}
\begin{tabular}{ccccc}
\toprule
Methods & Many & Medium & Few & All \\ \midrule
CE & \textbf{68.2} & 38.1 & 5.82 & 45.3 \\
CE-RS & 64.6 & 42.6 & 17.8 & 47.8 \\
CE-RW & 52.0 & 41.4 & 19.8 & 42.5 \\
CE-DRW & 52.6 & 45.7 & 31.5 & 46.4 \\
% LDAM-DRW$^\ddag$ & - & - & - & 48.8 \\
CE-cRT & 58.8 & 33.0 & 26.1 & 47.3 \\
CE-LWS & 57.1 & 45.2 & 29.3 & 47.7 \\
MiSLAS & 61.7 & \textbf{51.3} & \textbf{35.8} & 52.7 \\ \midrule
FCL & 61.4 & 47.0 & 28.2 & 49.8 \\
KCL & 62.4 & 49.0 & 29.5 & 51.5 \\
TSC & 63.5 & 49.7 & 30.4 & 52.4 \\  \midrule
ETF classifier+ DR & - & - & - & 44.7 \\
ARB-Loss & 60.2 & 51.8 & 38.3 & 52.8 \\
\midrule
NC-DRW & \underline{67.1} & 49.7 & 29.0 & \underline{53.6} \\
NC-DRW-cRT & 65.6 & \underline{51.2} & \underline{35.4} & \textbf{54.2}\\ 
\bottomrule
\end{tabular}
\vspace{-3mm}
\end{table}

\begin{table}[ht]
\small
\centering
\caption{Top-1 test accuracy (\%) on real-world long-tail datasets of our methods combined with others. Note that we replicated experiments of RIDE with data distributed parallel training and got results with slight differences from \cite{wang2020long}. c10, c100 and iNet are short for CIFAR10, CIFAR100 and ImageNet respectively.}
\label{tb:combine}
\vspace{2mm}
\begin{tabular}{cccc}
\toprule
Method & c10-LT & c100-LT & iNet-LT \\ \midrule
LDAM-DRW & 77.0 & 42.0 & 48.8 \\
+NC & 77.1(0.1$^\uparrow$) & 43.2(1.2$^\uparrow$) & 49.5(0.7$^\uparrow$) \\
Logit Adjust & 77.4 & 43.9 & 51.1 \\
+NC & 78.8(1.4$^\uparrow$) & 44.6(0.7$^\uparrow$) & 53.2(2.1$^\uparrow$) \\
RIDE (2 experts) & - & 46.5 & 51.9 \\
RIDE (3 experts) & - & 47.5 & 54.2 \\
RIDE (4 experts) & - & 48.8 & 55.2 \\
+NC (2 experts) & - & 46.8(0.3$^\uparrow$) & 52.2(0.3$^\uparrow$) \\
+NC (3 experts) & - & 48.1(0.6$^\uparrow$) & 54.8(0.6$^\uparrow$) \\
+NC (4 experts) & - & 49.1(0.3$^\uparrow$) & 56.0(0.8$^\uparrow$)\\
\bottomrule
\end{tabular}
\vspace{-3mm}
\end{table}

\begin{table*}[ht]
\small
\centering
\caption{Random Noise Robustness Results. CE$^\ddag$  denotes Cross-Entropy loss with the feature regularization $\mathcal{L}_{W}+\mathcal{L}_{B}$.}
\vspace{2mm}
\label{Gnoise}
\begin{tabular}{cc|ccccc|ccccc}
\toprule
\multicolumn{2}{c|}{Gaussian noise std} & 0.00 & 0.10 & 0.20 & 0.30 & 0.40 & 0.00 & 0.10 & 0.20 & 0.30 & 0.40 \\ \midrule
\multicolumn{1}{c|}{Dataset} & Loss & \multicolumn{5}{c|}{ResNet-32} & \multicolumn{5}{c}{ResNet-18} \\ \hline
\multicolumn{1}{c|}{\multirow{2}{*}{CIFAR10}} & CE & \textbf{93.3} & 89.4 & 76.6 & 54.7 & 35.7 & 94.9 & 91.5 & \textbf{79.2} & 56.9 & 35.9 \\
\multicolumn{1}{c|}{} & CE$^\ddag$ & 93.1 & \textbf{89.6} & \textbf{76.6} & \textbf{57.5} & \textbf{39.0} & \textbf{95.1} & \textbf{91.8} & 78.9 & \textbf{57.1} & \textbf{37.4} \\ \hline
\multicolumn{1}{c|}{\multirow{2}{*}{CIFAR10-LT}} & CE & 77.0 & 75.0 & 62.3 & 45.3 & 32.5 & 79.2 & 75.7 & 63.5 & 47.2 & 33.8 \\
\multicolumn{1}{c|}{} & CE$^\ddag$ & \textbf{79.2} & \textbf{75.5} & \textbf{63.2} & \textbf{47.8} & \textbf{35.5} & \textbf{81.0} & \textbf{77.5} & \textbf{66.5} & \textbf{51.4} & \textbf{37.9} \\ \hline
\multicolumn{1}{c|}{\multirow{2}{*}{CIFAR100}} & CE & 71.8 & \textbf{60.2} & \textbf{40.0} & 23.9 & 14.1 & 78.2 & 65.9 & 44.2 & 25.0 & 14.3 \\
\multicolumn{1}{c|}{} & CE$^\ddag$ & \textbf{72.3} & 59.0 & 39.7 & 23.9 & \textbf{15.1} & \textbf{78.6} & \textbf{67.8} & \textbf{46.9} & \textbf{28.2} & \textbf{16.6} \\ \hline
\multicolumn{1}{c|}{\multirow{2}{*}{CIFAR100-LT}} & CE & 42.5 & 37.2 & 25.4 & 16.3 & 10.2 & 46.8 & 41.1 & 31.6 &\textbf{23.6}  & \textbf{17.6} \\
\multicolumn{1}{c|}{} & CE$^\ddag$ & \textbf{45.7} & \textbf{39.0} & \textbf{27.2} & \textbf{16.8} & \textbf{11.1} & \textbf{47.2} & \textbf{41.6} & 31.6 & 23.3 & 16.3 \\ \bottomrule
\end{tabular}
\vspace{-2mm}
\end{table*}

\paragraph{Combine with existing approaches.}
%Actually, our method is orthogonal to most current approaches and can be easily combined to achieve further improvement. We confirm our claims by combining three different types of methods on several datasets, and the results are shown in Table \ref{tb:combine}. For all methods, we follow their original setting, adding our regularization terms according to Table \ref{Tab:hyper_setting}. LDAM-DRW \citep{cao2019learning} is a better loss function that adjusts the class margins based on their sample size with deferred re-weighting. Logit Adjustment \citep{DBLP:conf/iclr/MenonJRJVK21} tunes the output logits based on label frequencies. RIDE \citep{wang2020long} is a multi-expert methods that ensemble models over different data distributions. The results suggest that different types of methods can be further promoted after coupling with our method, reflecting the adaptability and effectiveness of our approach.
Our regularization terms can be easily plugged into most of the existing algorithms. To validate the effectiveness, in Table \ref{tb:combine}, we add the proposed regularization terms to three different types of algorithms. We follow their original experiment settings to compare the performance differences before and after adding regularization terms. The results show that our regularization terms can increase the accuracy in all three algorithms.
%Here, LDAM-DRW \citep{cao2019learning}  adjusts the class margins based on their sample size using deferred re-weighting. Logit Adjustment \citep{DBLP:conf/iclr/MenonJRJVK21} tunes the output logits based on label frequencies. RIDE \citep{wang2020long} is a multi-expert methods that ensemble models over different data distributions.

% Please add the following required packages to your document preamble:
% \begin{table}[]
% \label{Gnoise}
% \begin{tabular}{llllll}
% \toprule
% \multirow{2}{*}{Dataset} & \multirow{2}{*}{Loss} & \multicolumn{4}{l}{Gaussian noise standard deviation} \\ %\cline{3-6} 
%  &  & 0.1 & 0.2 & 0.3 & 0.4 \\ \midrule
% \multirow{2}{*}{CIFAR10} & CE & 87.05 & 81.04 & 74.89 & 67.69 \\
%  & CE$^\ddag$ & 90.61 & 86.74 & 82.13 & 76.74 \\ \midrule
% \multirow{2}{*}{CIFAR10-LT} & CE &  &  &  &  \\
%  & CE$^\ddag$ &  &  &  &  \\ \midrule
% \multirow{2}{*}{CIFAR100} & CE &  &  &  &  \\
%  & CE$^\ddag$ &  &  &  &  \\ \midrule
% \multirow{2}{*}{CIFAR100-LT} & CE &  &  &  &  \\
%  & CE$^\ddag$ &  &  &  &  \\ \bottomrule
% \end{tabular}
% \end{table}

\vspace{-2mm}
\begin{table}[ht]
\small
% \centering
\begin{center}
\caption{\textbf{Ablation studies} on the effectiveness of each regularization term on CIFAR10/100-LT. Note that we apply DRW for all experiments here.}
\label{Tab:ablation}
\vspace{2mm}
\begin{tabular}{ccccc}
\toprule
Method & \multicolumn{2}{c}{CIFAR10-LT} & \multicolumn{2}{c}{CIFAR100-LT} \\ \midrule
imbalance ratio & 100 & 10 & 100 & 10 \\
\midrule
CE      & 75.1 & 86.4 & 42.4 & 56.2 \\
+Centor Loss  & 78.7 & 89.1 & 46.3 & 61.2 \\
+$\mathcal{L}_W$    & 79.1 & 88.1 & 46.9 & 61.3 \\
+$\mathcal{L}_B$    & 80.1 & 88.6 & 47.6 & 61.7 \\ 
+Centor Loss \&$\mathcal{L}_B$  & 77.5 & 89.2 & 46.5 & 61.4 \\
+$\mathcal{L}_W$\&$\mathcal{L}_B$ & \textbf{81.9} & \textbf{89.8} & \textbf{48.6} & \textbf{63.1} \\
\bottomrule
\end{tabular}
\end{center}
\end{table}
% \vspace{-3mm}

\subsection{Discussions}
In this section, to verify the correctness and further explore the properties of our method, we show the learned representations, performance robustness, and ablation study on various combinations of loss and regularizations.

\subsubsection{Representation Analysis}
\label{sec:ft_analyze}
We extensively analyze the representations learned with our method to explain the advantages relative to the baseline. As for the corresponding analysis of classifiers, we obtained consistent findings with previous studies \citep{kang2019decoupling} and therefore do not repeat them here.

\paragraph{Maximally separated class centers.}
We compare the pair-wise angles of the centered class means learned on CIFAR10-LT with vanilla training, re-sampling (RS), re-weighting (RW), and the proposed regularization terms in Figure \ref{fig:pair-wise-angle}. We arrange the class indexes in descending order based on their sizes. 
Under a long-tailed distribution, the minority class centers move closer to the majority with plain model. In Figure \ref{fig:angle-LT}, the angles between class 8 and 0, class 9 and 1, and class 5 and 3 are around $50^\circ$ which is far lower than the optimal angle of $96^\circ$. 
RS and RW can assist in the acquisition of more distinguishable features, as demonstrated Figure \ref{fig:angle-LT-RS} and \ref{fig:angle-LT-RW}). However, with our regularization terms (Figure \ref{fig:angle-LT-nc}), we can observe that the pair-wise angles between all the class centers remain consistently close to the optimum value. In addition, the significant improvement on the experimental results indicates that the features learned by our method are more generalizable.
%We repeated the experiment several times and obtained the similar pattern, \textit{i.e.}, 

\begin{figure*}[ht]
\centering
\subfigure[vanilla]{
\label{fig:angle-LT}
\begin{minipage}{3.8cm}
\centering
\includegraphics[width=3.6cm]{./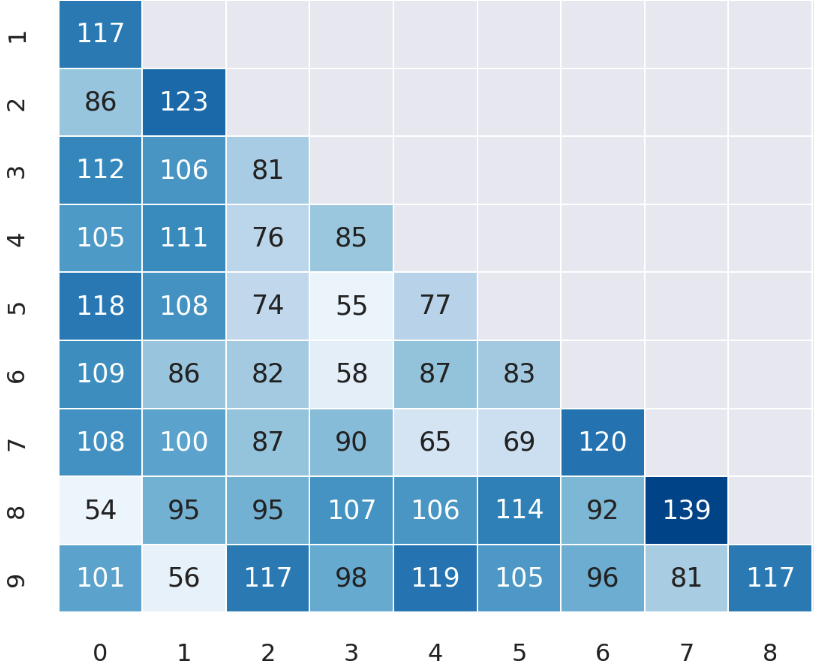}
% \caption{without}
\end{minipage}%
}%
\subfigure[w/ RS]{
\label{fig:angle-LT-RS}
\begin{minipage}{3.8cm}
\centering
\includegraphics[width=3.6cm]{./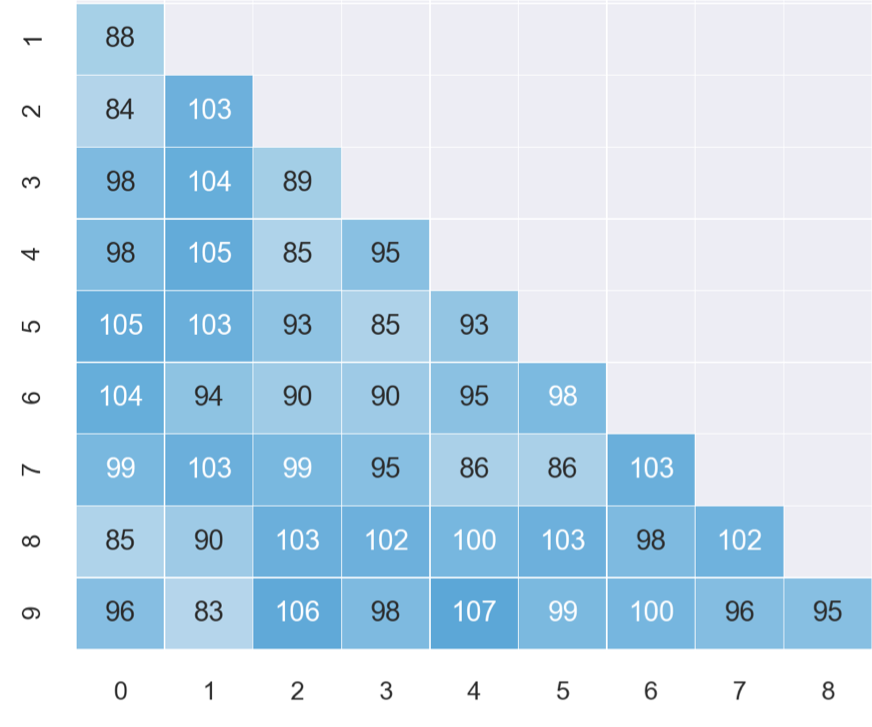}
% \caption{without}
\end{minipage}%
}%
\subfigure[w/ RW]{
\label{fig:angle-LT-RW}
\begin{minipage}{3.8cm}
\centering
\includegraphics[width=3.6cm]{./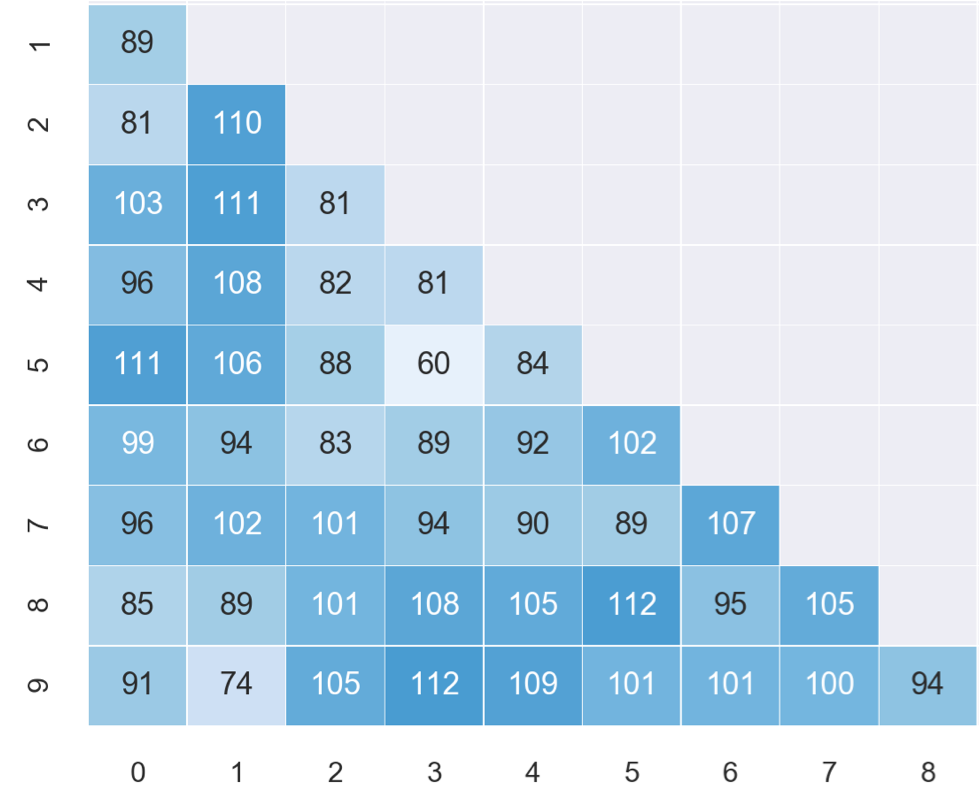}
% \caption{without}
\end{minipage}%
}%
\subfigure[w/ $\mathcal{L}_W$ and $\mathcal{L}_B$]{
\begin{minipage}{4.4cm}
\label{fig:angle-LT-nc}
\centering
\includegraphics[width=4.1cm]{./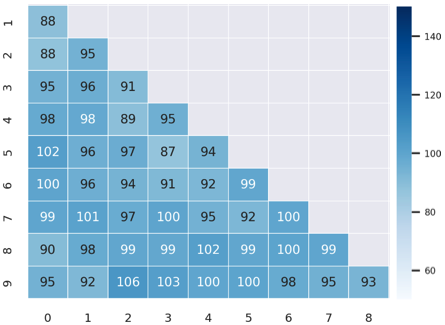}
% \caption{CE and $\mathcal{NC}$}
\end{minipage}
}
\vspace{-2mm}
\caption{Pair-wise angle degree between centered class means trained on CIFAR10-LT. Note that the optimal pair-wise angle for 10 classes is $\arccos \frac{-1}{10-1} \approx 96.4^\circ$.}
\label{fig:pair-wise-angle}
\vspace{-2mm}
\end{figure*}

\begin{figure}[ht]
\centering
\subfigure[CIFAR10(-LT)]{
\begin{minipage}[t]{0.33\linewidth}
\centering
\includegraphics[width=1.21in]{./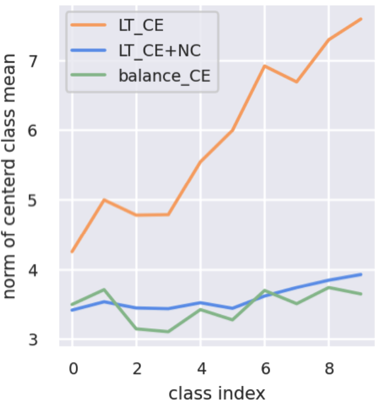}
% \caption{CIFAR10}
\end{minipage}%
}%
\subfigure[CIFAR100(-LT)]{
\begin{minipage}[t]{0.66\linewidth}
\centering
\includegraphics[width=1.9in]{./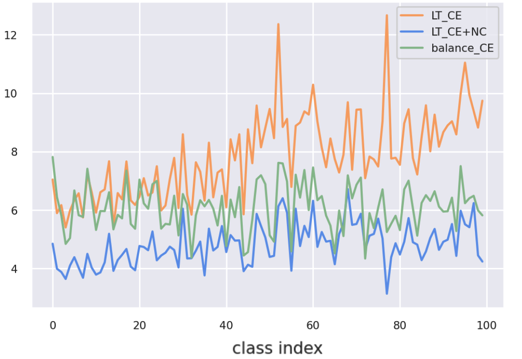}
% \caption{CIFAR100}
\end{minipage}%
}%
\centering
\caption{The norm of centered class means on a balanced dataset, long-tailed dataset w/ and w/o inducing NC is represented in different colors. Note that the class index is inversely sorted by the sample size.}
\label{fig:center-norm}
\vspace{-2mm}
\end{figure}

\paragraph{Zero-centered class means with the equal norm.}
Although neither $\mathcal{L_W}$ nor $\mathcal{L}_B$ forces the class center to be of equal norm, we can observe it in our experiments, as shown in Figure \ref{fig:center-norm}. This result strongly indicates that we can successfully induce $\mathcal{NC}$ in imbalanced data. % By this, we can argue that the proposed regularization terms surely can lead the $\mathcal{NC}$ phenomena happen with imbalanced training set.

\subsubsection{Robustness}
We test the robustness of our method against random noise with different neural networks on CIFAR10/100 and their long-tailed version where the imbalance ratio $r=100$. Here Resnet-32 and ResNet-18 are employed. ResNet-18 is a wider network with the last-layer feature dimension of 512, while ResNet-32 is 64. The models are all trained with DRW. The results are reported in Table \ref{Gnoise}. We can observe that our regularization terms can improve the robustness for different model capacities.

\subsubsection{Ablation Study}
We conduct experiments to examine the effectiveness of two regularization terms separately over CE loss and the comparison with \textit{center loss}. 
The results, presented in Table~\ref{Tab:ablation}, demonstrate that each term can significantly improve accuracy individually, and that their combination produces the best results.
Meanwhile, $\mathcal{L}_W$ consistently produces better results than \textit{center loss}, suggesting that modifying the coefficient is crucial.
We can also find that $\mathcal{NC}_2$ property is more useful, implying the importance of sufficiently distant class centers for the long-tail recognition task.

\section{RELATED WORK}
\subsection{Long-tailed recognition} 
Long-tailed distribution is ubiquitous in the real world, which brings big challenges for most deep learning models. Classical methods dealing with this problem include data re-sampling and loss re-weighting. The former refers to re-sampling the instances to achieve relatively balanced training data, basically including over-sampling \citep{ando2017deepoversample,shelke2017reviewresample}, under-sampling \citep{shelke2017reviewresample}, and class-balanced sampling \citep{cui2019effectivenum}. 
Instead of changing the original data distribution, loss re-weighting uses cost-sensitive re-weighting strategies and assigns different weights to instances from different classes according to the sample sizes \citep{lin2017focal,cui2019effectivenum}.
However, although the re-sampling and re-weighting approaches can improve the performance of minority classes, they may lead to overfitting \citep{li2022targeted} and hurt the representation learning \citep{kang2019decoupling}.

Recent works also focus on representation learning under long-tailed data distribution. This stream of study mainly follows a two-stage training scheme that decouples the representation and classifier learning \citep{kang2019decoupling, zhong2021improving, li2022targeted, kang2020exploring, zhu2022balancedCL}. \cite{kang2019decoupling} observed that a high-quality representation requires fully utilizing the training instances equally, while a re-balancing technique is crucial for an unbiased classifier. 
On the other hand, some works take advantage of the superior representation learning ability of contrastive loss to extract the feature for deep long-tailed learning; then train a classifier upon the feature extractor with cost-sensitive loss or class-balanced sampling \citep{kang2020exploring,li2022targeted,zhu2022balancedCL}. Supervised contrastive learning shows 
superiority in representation learning under imbalanced distribution and achieves SOTA for long-tailed recognition tasks \citep{li2022targeted, zhu2022balancedCL,cui2022generalized}. However, these methods usually converge slowly and require complex network structures compared to traditional supervised learning.

Researchers also explored methods based on the ensemble. They usually utilize multiple models over different data distributions \citep{wang2020long} or perform representation learning and classifier training with separate branches \citep{zhou2020bbn,zhu2022balancedCL}.
This kind of approach is generally considered to be orthogonal to the single-model approach described above.

\subsection{Neural collapse}

A recent study \citep{papyan2020prevalence} discovered the phenomenon named \textit{Neural Collapse} ($\mathcal{NC}$), stating that the last-layer embedding and classifiers will converge to a symmetric geometry named simplex \textit{Equiangular Tight Frame} (ETF) for deep classifiers trained on balanced data. A more precise description of the $\mathcal{NC}$ phenomena is delivered in Section \ref{sec:pre}. Subsequent studies indicate that $\mathcal{NC}$ will eventually occur, independent of the loss function, the optimizer, batch-normalization, and regularization, as long as the training data exhibits a balanced distribution \citep{zhu2021geometric,han2022neural,kothapalli2022neural}. 
Meanwhile, the intrinsic merit of $\mathcal{NC}$ has also been revealed, including ensuring global optimality, stronger generalization and robustness, and transferability \citep{papyan2020prevalence, zhu2021geometric, galanti2021role}. 

The investigation of $\mathcal{NC}$ has also been carried over to the imbalanced data case, where different phenomena are uncovered.
% Some work has investigated what different phenomena can occur under the imbalanced data distribution.
\cite{fang2021exploring} demonstrated that the minority classifiers have smaller pair-wise angles than the majority ones and will even merge together as the imbalance level increases, named \textit{Minority Collapse}. This phenomenon provides some reason of the performance drop. \cite{thrampoulidis2022imbalance} provides a general frame that is equivalent to ETF for balanced data, and reveals an asymmetric geometry of the last-layer feature and classifiers for imbalanced distribution. Furthermore, the perfect alignment between the class feature means and classifiers vanished under the imbalanced distribution. However, \cite{thrampoulidis2022imbalance} illustrates the general geometry with a special encoding framework and does not discuss whether this geometry with an imbalanced dataset has merit or defect.

Inspired by the $\mathcal{NC}$ phenomenon, some researchers have attempted to improve the model's classification ability encountering imbalanced distribution by eliminating \textit{Minority Collapse}, including fixing the classifier as an ETF \citep{yang2022we} and adjusting the CE loss \citep{xie2023neural}. 
Distinct from these works, our work analyzes that obtaining high-quality features is the key to the improvement and thus proposes regularization to guide learning representations.
% which avoid the occurrence of minority collapse and thus improving the classification ability.
% \cite{yang2022we} proposes to fix the last layer classifiers as an ETF and attract the features closer to the corresponding classifier; \citep{xie2023neural} proposes to adds coefficients on the denominator of the CE loss to balance the gradients from different classes and achieve closer to the $\mathcal{NC}$ state.

% \section{ANALYSIS}
\section{CONCLUSIONS}
In this paper, we argue that the existence of $\mathcal{NC}$ is crucial for long-tailed recognition and propose two simple but effective regularization terms to induce the appearance of $\mathcal{NC}$. We empirically show that under the imbalanced data distribution, the class centers of minority classes are close to the majority ones, leading to the overlap among different classes over the feature space and confusion of the classifier. With our method, the deep classification models are able to learn \textit{compact within-class} and maximally \textit{distinct between-class} features. Extensive experiments confirm that our method can enhance the generalization power of the deep classification model, especially when the training set is imbalanced. Our method is more efficient than contrastive loss based methods, and we set new state-of-the-art performance for single model based methods on widely used benchmarks. 

Our proposed regularization guides the representation learning to be of `optimal' geometry for classification, which is particularly beneficial for training sets with imbalanced labels. However, the learned geometry is validated empirically and lacks complete theoretical guarantees, leading to manually tuning the related hyperparameters. 
In the future, we plan to formally analyze the geometry obtained with our regularization and provide some theoretical justification for the choice of hyperparameters.

\section*{Acknowledgements}
This work was supported in part by National Natural Science Foundation of China / Research Grants Council (NSFC/RGC) Joint Research Scheme Grant N\_HKUST635/20, Hong Kong Research Grant Council (HKRGC) Grant 16308321, ITF UIM/390, as well as awards from Smale Institute of Mathematics of Computation. This research
made use of the computing resources of the X-GPU cluster supported by the HKRGC Collaborative Research Fund C6021-19EF.

%\subsubsection*{References}

\bibliography{ref}
\bibliographystyle{ref}
% \begin{thebibliography}{}
% \setlength{\itemindent}{-\leftmargin}
% \makeatletter\renewcommand{\@biblabel}[1]{}\makeatother
% \bibitem{} J.~Alspector, B.~Gupta, and R.~B.~Allen (1989).
%     \newblock Performance of a stochastic learning microchip.
%     \newblock In D. S. Touretzky (ed.),
%     \textit{Advances in Neural Information Processing Systems 1}, 748--760.
%     San Mateo, Calif.: Morgan Kaufmann.

% \bibitem{} F.~Rosenblatt (1962).
%     \newblock \textit{Principles of Neurodynamics.}
%     \newblock Washington, D.C.: Spartan Books.

% \bibitem{} G.~Tesauro (1989).
%     \newblock Neurogammon wins computer Olympiad.
%     \newblock \textit{Neural Computation} \textbf{1}(3):321--323.
% \end{thebibliography}
\end{document}